\newtheorem{thm}{Theorem} 
\newtheorem{prop}{Proposition}
\theoremstyle{definition}
\newtheorem{defn}{Definition}
\newcounter{gaocomm}
\definecolor{blue-violet}{rgb}{0.00,0.75,0.90}
\definecolor{mygreen}{rgb}{0.0, 0.5, 0.0}
\definecolor{awesome}{rgb}{1.0, 0.13, 0.32}
\definecolor{bostonuniversityred}{rgb}{0.8, 0.0, 0.0}
\renewcommand\arraystretch{1.4}
\newcounter{ToDo1}
\newcounter{guocomm}
\newcounter{Note1}
\definecolor{blue-violet1}{rgb}{0.54, 0.17, 0.89}
\definecolor{mygreen}{rgb}{0.0, 0.5, 0.0}
\definecolor{awesome}{rgb}{1.0, 0.13, 0.32}
\definecolor{wsuacdred}{rgb}{0.93, 0.0, 0.2}
\definecolor{wsucrimson}{rgb}{0.6, 0.0, 0.2}
\newcommand{\guorm}[1]{\ignorespaces}
\title{\textbf{Design Your Own Universe: A Physics-Informed Agnostic Method for
Enhancing Graph Neural Networks}}
\author{Dai Shi \footnote{Equal contributions from the first three authors.} \footnote{University of Sydney,
(\text{dai.shi, lequan.lin, zhiyong.wang, junbin.gao@sydney.edu.au})}
\and Andi Han\footnote{andi.han@riken.jp} \footnotemark[1]
\and Lequan Lin \footnotemark[1]\footnotemark[2] 
\and  Yi Guo \footnote{Western Sydney University,
(\text{y.guo@westernsydney.edu.au}).} 
\and Zhiyong Wang \footnotemark[2] \and  Junbin Gao \footnotemark[2]
}
\date{}
\begin{document}

\maketitle

\section{Introduction}
Graph Neural Networks (GNNs) have demonstrated exceptional performance in learning tasks involving graph-structured data \cite{sperduti1993encoding,gori2005new,scarselli2008graph,kipf2016semi}. Recent studies in the GNN domain have unveiled a connection between continuous physical diffusion processes and their discretized versions on graphs \cite{chamberlain2021grand,han2023continuous}. Accordingly, numerous physics-informed methods have been proposed to mitigate issues in GNN such as over-smoothing (OSM) \cite{thorpe2022grand}, over-squashing (OSQ) \cite{topping2021understanding}, and heterophily adaption \cite{di2022graph}. Specifically, in addressing the OSM problem, one needs to ensure that node features remain distinguishable after several iterations of GNN \cite{cai2020note,rusch2023survey}. To achieve this, it may be necessary for some connected nodes to ``repulse'' each other, so they can be categorized into different classes from GNN outputs \cite{wang2022acmp,gao2022repulsion}. Methods that enable GNNs to do this naturally enhance the networks' ability to adapt to heterophily graphs, in which connected nodes are more likely to belong to different classes of labels. On the other hand, to mitigate OSQ issues, it is crucial to allow information to flow more effectively through GNN propagation \cite{shi2023exposition}. This can be accomplished by leveraging strategies such as graph adjacency rewiring and reweighting based on topological features like curvatures \cite{topping2021understanding,fesser2023mitigating,shi2023curvature,shi2024new} and spectral expanders \cite{karhadkar2023fosr,black2023understanding,banerjee2022oversquashing}. Additionally, it has been observed that there is a trade-off between the OSM and OSQ issues in GNNs, leading to a few recent studies to focus on alleviating both issues simultaneously \cite{shao2023unifying,giraldo2023trade}.

Based on the aforementioned contents, to mitigate both OSM and OSQ problems, one may prefer to leverage a mixed operation that involves graph rewiring to reduce the OSQ issue and induce repulsive forces between nodes to alleviate OSM issues. In this paper, inspired by the field of particle systems in physics, we introduce a novel model agnostic paradigm that leverages node labeling information to induce both repulsive forces and rewiring on the graph. Our approach can be applied to many GNNs for deep learning on graphs. {Specifically, our method adds the so-called collapsing nodes (CNs) with different labeling information to the original graph. These CNs are then connected to all other nodes with known labels with different signs of their edge weights, depending on whether node labels are aligned. That is, if the nodes in the original graph share the same label with the connected CN, then the edge weight will be positive; otherwise, it is negative. This operation ensures nodes (with known labels) will be ``attracted'' by the CNs that contain the same label as them and be ``repulsed''  to those CNs with different labels. In the later section, we show that this dynamic between nodes helps our model avoid the OSM problem. Another benefit of this label-based rewiring between CNs and original nodes is that this operation makes the feature information transaction ``easier'' in the newly rewiring graph, thus naturally mitigating the OSQ problem. Lastly, we also quantified the effects of incorporating CNs on the graph curvature and the filtering functions on the graph spectra, which are commonly leveraged as indicators of OSQ \cite{topping2021understanding, fesser2023mitigating} and OSM \cite{di2022graph} problems, respectively. 
}

\paragraph{Contributions} 
First, we introduce the notion of CNs that served as reliable ``gravitational'' sources via GNNs training. We show that guided by the node labeling information, the graph adjacency, expanded by the CNs, makes the propagation of GNNs through both attractive and repulsive forces. Second, we theoretically verify that our proposed models can mitigate both OSM and OSQ issues and the heterophily adaption problem. We also discuss the spectral property of the CNs rewired graph and show that it is those negative eigenvalues that enhance our models' adaption power to heterophilic graphs. Furthermore, we show the impact of CNs via graph curvature and the trade-off relation between OSM and OSQ problems. Lastly, we conduct various experiments to demonstrate the effectiveness of our method on both homophilic and heterophilic graphs as well as long-term learning tasks.   

\section{Preliminary}
\paragraph{Graph Basics and Graph Homophily}
Let $\mathcal{G} = (\mathcal{V}, \mathcal{E}_0)$ represent an connected undirected graph with $N$ nodes, where $\mathcal{V}$ and $\mathcal{E}_0$ denote the sets of nodes and edges, respectively. The adjacency matrix $\mathbf{A} \in \mathbb{R}^{N \times N}$ is defined such that $a_{i,j} = 1$ if $(i, j) \in \mathcal E_0$ and zero otherwise. We introduce $\mathbf{X} \in \mathbb{R}^{N \times d_0}$ to represent the matrix of $d_0$-dimensional nodes features, with $\mathbf{x}_i \in \mathbb{R}^{d_0}$ as its $i$-th row (transposed).
Additionally, we define $\mathbf{Y} \in \mathbb{R}^{N \times C}$ as the node label matrix, comprising label vectors for the labeled nodes (via one-hot coding) and zero vectors for the unlabeled nodes, where $C$ is the total number of classes. Apart from these basic notations on graph, we also recall the notion of so-called graph homo/heterophily, which shows how labels are distributed among connected nodes.
\begin{defn}[Homophily and Heterophily]
\label{HomophilyHeterophily}
The homophily or heterophily of a network is used to define the relationship between labels of connected nodes. Denote $\mathcal N_i\subseteq \mathcal V$ as the neighbors of node $i$. The level
of homophily of a graph is measured by the positive score $\mathcal{H(G)} = \mathbb{E}_{v_i \in \mathcal{V}}[|\{v_j: v_j\in \mathcal{N}_{i} \text{ and }  y_j= y_i\}|/|\mathcal{N}_i|]$. {A score $\mathcal{H(G)}$ close to 1}  corresponds to strong homophily
while a score $\mathcal{H(G)} $ nearly 0 indicates strong heterophily. We say that
a graph is a homophilic (heterophilic) graph if it has   stronger homophily (heterophily), or simply 
strong homophily (heterophily).  

\end{defn}
Based on the definition of graph homophily, one can see that, compared to homophily graphs which require a learning model to predict nearly identical labels to connected nodes, distinct label predictions are preferred for heterophilic graphs.

\paragraph{MPNNs and Graph Neural Diffusion}
Consider node $i$ with feature representation $\mathbf{h}_i^{(\ell)}$ at layer $\ell$, and  $\mathbf{h}^{(0)}_i = \mathbf x_i$. Message Passing Neural Networks (MPNNs) \cite{gilmer2017neural} use message functions $\psi_\ell : \mathbb R^{d_\ell} \times \mathbb R^{d_\ell} \rightarrow \mathbb R^{d_\ell'}$ and update functions $\phi_\ell: \mathbb R^{d_\ell} \times \mathbb R^{d_\ell'} \rightarrow \mathbb R^{d_{\ell +1}}$ defined by:
\begin{align}\label{mpnn}
\mathbf{h}_i^{(\ell+1)} = \phi_\ell \left (\mathbf{h}_i^{(\ell)}, \sum_{j \in \mathcal N_i} {\mathbf{A}}_{ij} \psi_\ell(\mathbf{h}_i^{(\ell)}, \mathbf{h}_j^{(\ell)})\right).
\end{align}
It is worth noting that $\mathbf A$ can be replaced with the features that are defined on the edges of the graph. For the simplicity of our analysis, we only consider MPNNs with $\mathbf A$ included. Classical GNNs like GCN \cite{kipf2016semi} and GAT \cite{velivckovic2018graph} are the examples of this MPNN framework. Crucially, MPNNs also act as solvers for discrete dynamics on graphs, like the well-known graph neural diffusion \cite{chamberlain2021grand,thorpe2022grand}. Specifically, 
\cite{chamberlain2021grand} incorporated the message passing scheme and its variants in their GRAND models as described as:
\begin{align}\label{grad}
\frac{\partial}{\partial t}\mathbf h(t) = \mathrm{div}\left(\mathbf G(\mathbf h(t),t)\nabla \mathbf h(t)\right),
\end{align}
where $\mathbf G(\mathbf h(t),t) = \mathrm{diag}(a(\mathbf h_i(t), \mathbf h_j(t),t))$ in which $a$ denotes a function that quantifies the similarity between node features, such as the attention coefficient \cite{velivckovic2018graph}. We further let $\nabla \mathbf h$ represents the graph gradient operator, defined as $\nabla : L^2(\mathcal V) \rightarrow L^2(\mathcal E_0)$  such that $(\nabla \mathbf h)_{ij} = \mathbf h_j - \mathbf h_i$, where $L^2(\mathcal V)$ and $L^2(\mathcal E_0)$ be Hilbert spaces for real-valued functions on $\mathcal V$ and $\mathcal E_0$, respectively with the inner products given by 
$$ \langle f, g \rangle_{L^2(\mathcal V)} = \sum_{i \in \mathcal V} f_i g_i, \,\,\,  \langle F, G \rangle_{L^2(\mathcal E_0)} = \sum_{(i,j) \in \mathcal E_0}  F_{ij} G_{ij},$$
for $f,g: \mathcal V \rightarrow \mathbb R$ and $F, \,\, G: \mathcal E_0 \rightarrow \mathbb R$. Similarly, we denote graph divergence ${\mathrm {div}}: L^2(\mathcal E_0) \rightarrow L^2(\mathcal V)$,  being the inverse of graph gradient operator, is defined as $(\mathrm{div}  F)_i = \sum_{j :(i,j) \in \mathcal E_0}  F_{ij}$.

{
\paragraph{OSM and Heterophily Adaption}
Although the phenomenon of OSM in GNN has been observed for years \cite{cai2020note}, a commonly accepted definition of OSM has just been proposed recently \cite{rusch2023survey}. The OSM in \cite{rusch2023survey} is defined as an exponential decay of the difference between node features. For example, asymptotically, one can show the after $t$ steps of iteration, the node feature $\mathbf H(t)$ generated from classic GCN model \cite{kipf2016semi} as: 
\begin{align}\label{GCN_solution}
    \mathbf H(t) = \mathrm{e}^{-t\mathbf L}\mathbf H(0) = \mathbf U\mathrm{e}^{-\boldsymbol{\Lambda}t}\mathbf U^\top \mathbf H(0),
\end{align}
where we denote $\mathbf L$ as the graph Laplacian, which admits an eigendecomposition $\mathbf L = \mathbf U \boldsymbol{\Lambda} \mathbf U^\top$, and $\boldsymbol{\Lambda} \in \mathbb R^{N\times N}$ is the diagonal matrix with entries of the eigenvalues of $\mathbf L$. One can observe that since $\mathbf L$ is semi-positive definite ($\boldsymbol{\Lambda}_{i,i} \geq 0$), with sufficient large of $t$, all features in $\mathbf H(t)$ tend to be identical. Consequently, all nodes will have the same prediction from GCN, which is known as the OSM problem. Obviously, one GNN with an OSM problem or consecutively smoothing the node features via its propagation is not preferred to fit the heterophilic graph in which many connected nodes have different labels. Therefore, to make GNN adapt to the heterophilic graph, an ideal GNN is to be able to induce both smoothing and sharpening effects on the node features. 

\cite{di2022graph} first explored this field by investigating the characteristics (monotonicity) of the filtering function incorporated in GNNs. Based on the conclusion from \cite{di2022graph}, the sharpening effect can be induced by leveraging the so-called high-pass filtering functions that monotonically increase the entries of the graph spectra or incorporating a transformation that generates negative eigenvalues. Accordingly, node features tend to be different based on Eq.~\eqref{GCN_solution}, and thus, \textbf{the OSM and graph heterophily problem are linked}. This observation has led to a range of follow-up studies \cite{han2022generalized,shi2023frameless,thorpe2022grand,shao2022generalized} that incorporate high-pass spectral filtering functions to mitigate both OSM and heterophily problems. Finally, it is worth noting that it can be verified that without considering the feature embedding process, if a GNN has a pure high-pass filtering function, then the node features will become more distinct from each other from the beginning of the propagation, then there will be no OSM problem \cite{di2022graph} in this GNN.

\paragraph{OSQ and the Trade-off Between OSM and OSQ}
The phenomenon of OSQ or bottleneck has just been identified recently \cite{alon2020bottleneck}, and unlike OSM, the OSQ problem has not obtained its commonly shared definition \cite{shi2023exposition}. Although OSQ has been measured via various indicators, all these methods are incorporated to measure the so-called sensitivity (or named as OSQ score mentioned in \cite{shi2023exposition}) between distanced nodes, that is: 
\begin{align}
  {\rm OSQ}(i,s) = \left\| \frac{\partial \mathbf{h}_i^{(\ell)}}{\partial \mathbf x_s} \right \|, 
\end{align}
where we denote $\ell$ as the number layers and $\|\cdot\|$ is spectral norm of a matrix. The key principle to check whether a GNN is capable of mitigating the OSQ problem is to check whether the upper bound of the OSQ score is increased. More specifically, from \cite{topping2021understanding}, the OSQ score is bounded by
\begin{align}\label{osq_problem}
        \left \|\frac{\partial \mathbf{h}_i^{(r+1)}}{\partial \mathbf x_s}\right \| \leq (\alpha\beta)^{r+1}({\mathbf{A}}^{r+1})_{is},
    \end{align}
where $\mathcal S_{r+1}(i):=\{j\in \mathcal V : d_\mathcal G(i,j) = r+1\}$  is the set of neighbours of node $i$ whose standard shortest path distance to node $i$ on graph $\mathcal G$, written as $d_\mathcal G(i,j)$, is $r+1 \in \mathbb N$. Note that ${\mathbf{A}}^{r+1}$ is ${\mathbf{A}}$ to the power of $r+1$. Therefore, one can see that one intuitive way of mitigating the OSQ problem is to \textbf{make the graph denser}, and this observation has motivated various graph rewiring approaches on OSQ based on different graph indicators, see Section \ref{related_works} for more details.

}

\section{Motivations and Model Formulation}

\subsection{Graph Diffusion as Particle System}
Let us go deeper to the scheme of GRAND presented in Eq.~\eqref{grad}, one can rewrite Eq.~\eqref{grad} into a component-wise form such that 
\begin{align}\label{component-wise_grand}
    \frac{\partial}{\partial t}\mathbf h_i = \sum_{j\in \mathcal N_i} a(\mathbf h_i, \mathbf h_j)(\mathbf h_j - \mathbf h_i), 
\end{align}
where we drop time $t$ from now on for the ease of notation. It suggests that the dynamic of the change of the feature of node $i$ is conducted by aggregating its neighbouring information, suggesting a homogenizing process on the connected nodes.  Furthermore, similar to the work in \cite{wang2022acmp}, one can interpret Eq.~\eqref{component-wise_grand} as an interactive particle system, in which all particles (nodes) attracted each other and eventually, after sufficient propagation, collapsed into one overlapped node with fixed feature, as long as we have the similarity score (i.e., $ a(\mathbf h_i, \mathbf h_j)$) positive throughout the propagating process. Although such feature processing might be friendly to homophily graphs, it is not hard to see that it may not be necessary for heterophily graphs, in which adjacent nodes are preferred to be pushed apart from each other, resulting in more distinctive node features. Accordingly, the recent work \cite{wang2022acmp} enhances Eq.~\eqref{component-wise_grand} by including negative similarities (i.e., negative edge weights) such that 
\begin{align}\label{acmp}
    \frac{\partial}{\partial t}\mathbf h_i = \sum_{j\in \mathcal N_i} (a(\mathbf h_i, \mathbf h_j) - \beta_{i,j}) (\mathbf h_j - \mathbf h_i) + \delta \mathbf h_i(1-\mathbf h^2_i),
\end{align}
in which $\beta_{i,j}$ is leveraged to adjust the sign of $(a(\mathbf h_i, \mathbf h_j) - \beta_{i,j})$ so that the particle system can adopt both attractive and repulsive forces. The additional term $\mathbf h_i(1-\mathbf h^2_i)$ is the double-well potential that is widely used in quantum particle physics \cite{jelic2012double} to prevent the so-called energy explosion, serving as a physical barrier (bound) of the node feature variation. Although remarkable improvement in learning accuracy has been observed from the model defined in Eq.~\eqref{acmp}, it is still unknown that \textbf{what type of force is needed for a given node pair}. In fact, in \cite{wang2022acmp}, $\beta_{i,j}$ was simplified as a single constant hyper-parameter throughout the training. This opens a rich venue for future research to explore the criteria for determining the suitable type of force, which is measured by the sign of similarity for the given pair of nodes.

\subsection{The Ideal Universe} \label{sec:critera_ideal_universe}
To identify the criteria of determining the type of forces for a given node pair, ideally one shall prefer the particle system to be evolved as 
\begin{tcolorbox}
\textit{All nodes with the same labels shall eventually collapse to one node with identical features. Nodes with different labels shall be apart from each other with distinctive features.} 
\end{tcolorbox}
This requirement directly suggests leveraging the node labeling information as a guide to determine the type of force for one specific node pair during the training process. Below we introduce the notion of (label) collapsing nodes.

\begin{defn}[Collapsing Nodes]
{Let $\mathcal{V} = \{v_1, ..., v_N \}$ be the nodes in the input graph and $\tilde v$ be a node with known label information $\tilde y$. $\tilde v$ is a collapsing node (CN) if for any $v_i \in \mathcal{V}$ with known labels $y_i$, $\tilde v$ is connected to $v_i$ with positive weights if $\tilde y = y_i$ and negative weights if $\tilde y \neq y_i$ regardless of original connectivity.}
  
\end{defn}

The definition of collapsing nodes (CNs) suggests that by leveraging the known node label information, CNs serve as a ``gravitational''
source via the feature propagation, since all nodes with available label information are connected to CNs, and will be attracted/repulsed by CNs if they have the same/different labels. We further note that CNs can be either sourced from the node from the original graph or added as additional nodes to the graph. For the convenience of this study, in the sequel, we only consider the second type of CNs (additional nodes to the graph). Accordingly, this augments the original graph by adding a connection matrix $\mathbf C \in \mathbb R^{N \times K}$ (for $K$ collapsing nodes),  
\begin{equation*}
    C_{i,k} = \begin{cases}
        +1, &\text{ if } y_i\text{ known and } y_i = \tilde y_k \\
        -1, &\text{ if } y_i \text{ known and } y_i \neq \tilde y_k \\
        0, &\text{ otherwise }
    \end{cases}
\end{equation*}
where we use $\tilde y_k$ to denote the label of CN $k$ for $k = 1,..., K$. For convenience, we set $K$ equal to $C$, the number of unique labels in a given graph.
The corresponding adjacency matrix after including CNs is thus given by   
\begin{align}\label{form_of_Ac}
    \mathbf A_c = 
    \begin{bmatrix}
        \mathbf A  & \mathbf C \\
        \,\, \,\mathbf C^\top & \mathbf 0
    \end{bmatrix}.
\end{align}
Further if the adjacency includes self-loops, then $\mathbf A_c = 
    \begin{bmatrix}
        \mathbf A  \,\,\, &\mathbf C \\
       \mathbf C^\top  &\mathbf 1_{K\times K} 
    \end{bmatrix}$,
where $\mathbf 1_{K\times K}$ is a diagonal matrix (block) of size $K\times K$ with entries of all ones. Considering that $\mathbf A_c$ contributes valuable node labeling information to the propagation of node features, a \textbf{model-agnostic framework} can be established. 
We name such framework as \textbf{UYGNNs}, short for \textbf{U}niversal Label based (\textbf{Y}) \textbf{G}raph \textbf{N}eural \textbf{N}etworks. Accordingly, one can define \textbf{UYGCN} that propagates nodes features with the following dynamic
\begin{align}
    \frac{\partial}{\partial t}\mathbf h_i = \sum_{j\in \mathcal N_i} (a^{\mathrm{GCN}}_c(\mathbf h_i, \mathbf h_j))(\mathbf h_j - \mathbf h_i), \label{eq:uygcn}
\end{align}
where $a^{\mathrm{GCN}}_c(\mathbf h_i, \mathbf h_j)$ denotes the edge similarities contained in $\mathbf A_c$. Note that here the index $i$ runs through all the $N+K$ nodes 
and $\mathcal N_i$ includes the neighbours of node $i$ from the extended edge set $\widehat{\mathcal E} = \mathcal E_0 \bigcup \mathcal E_1$, where $\mathcal E_1$ denotes the set of edges of connecting nodes with label information available to CNs. 
Similarly, attention mechanism \cite{velivckovic2018graph} can be deployed into the above massage passing scheme. In this case, one can further define the dynamic of \textbf{UYGAT} as
\begin{align}\label{uygat}
    \frac{\partial}{\partial t}\mathbf h_i = \sum_{j\in \mathcal N_i} &\big(a^{\mathrm{GCN}}_c(\mathbf h_i, \mathbf h_j)\cdot a^{\mathrm{GAT}}_c(\mathbf h_i, \mathbf h_j)\big) \cdot\big(\mathbf h_j - \mathbf h_i\big),
\end{align}
where $a^{\mathrm{GAT}}_c(\mathbf h_i, \mathbf h_j) >0$ is the attention coefficient for the connected nodes pair $(i,j)$. The multiplication between $a^{\mathrm{GCN}}_c$ and $a^{\mathrm{GAT}}_c$ is leveraged to preserve the sign of the edge weights so that the type of forces can be maintained. Furthermore, to prevent the potential energy explosion in UYGNN models, similar to ACMP \cite{wang2022acmp}, double-well potential as well as other type of physical potential terms (e.g., harmonic oscillator potential \cite{marsiglio2009harmonic}) can be leveraged to further restrict the motion of nodes.

\subsection{An even More Complexed Universe}
Apart from the structure of UYGCN and UYGAT, which only assigns repulsive force between NC and the nodes with different labels, node labeling information can also serve as a guide to assign repulsive forces on original graph connectivities. Specifically, one can achieve this by leveraging the label-based adjacency matrix $\mathbf A_y$, with its entries defined as
\begin{equation*}
    (\mathbf {A}_y)_{i,j} = \begin{cases}
        +1, &\text{ if } y_i = y_j \\
        -1, &\text{ if }  y_i \neq y_j \\
        0, &\text{ otherwise }
    \end{cases}
\end{equation*}
for any edge pair $(i,j) \in \mathcal E$. {We highlight that the term ``otherwise'' in the above expression stands for the case that either or both nodes other than CNs are without labeling information.} The corresponding dynamic is 
\begin{align}
    \frac{\partial}{\partial t}\mathbf h_i 
    =&\sum_{j\in \mathcal N^0_i} (a_y(\mathbf h_i,\mathbf h_j) \cdot a(\mathbf h_i, \mathbf h_j))\cdot(\mathbf h_j - \mathbf h_i) 
    + \sum_{j\in \mathcal N^1_i} a_c(\mathbf h_i, \mathbf h_j)\cdot(\mathbf h_j - \mathbf h_i),
\end{align} 
where we denote $\mathcal N^0_i$ and $\mathcal N^1_i$ as the neighbors of node $i$ from $\mathcal E_0$ and $\mathcal E_1$, respectively. It is not hard to see that for any nodes with label information above, such universe design will ensure that nodes will \textbf{only} be attracted by those nodes with the same label and pushed away from nodes with different labels.

\subsection{MLP in\&out Paradigm}
To implement the propagation according to the dynamic UYGNNs (i.e., Eq.~\eqref{eq:uygcn}), one can conduct an MLP in-and-out paradigm \cite{chamberlain2021grand} as follows. 
\begin{align}
    &\mathbf H^{(\ell)} = \mathrm{MLP}^{(\ell)}_{\mathrm {in}} (\mathbf H^{(\ell)}), \label{eq: mlp_in} \\
    & \mathbf H' = \sigma_\ell (\mathbf A_c \mathbf H^{(\ell)}), \label{eq: propagation} \\
    & \mathbf H^{(\ell +1) } = \mathrm{MLP}^{(\ell)}_{\mathrm {out}} (\mathbf H'), \label{eq: mlp_in_out}  
\end{align}
where we have $\mathbf H^{(0)} = \mathbf X$. Practically, $\mathrm{MLP}_{\mathrm {in}} $ and $\mathrm{MLP}_{\mathrm {out}} $ are implemented with two learnable channel mixing matrices $\mathbf W^{(\ell)}_{\mathrm {in}} \in \mathbb R^{d_\ell \times d'_\ell}$ and  $\mathbf W^{(\ell)}_{\mathrm {out}} \in \mathbb R^{d'_\ell \times d_{\ell+1}}$, respectively, and $\sigma$ is the activation function. One can further check that the computational complexity of UYGCN is with $\mathcal O(|\widehat{\mathcal E}|d'_\ell)$, which is similar to the classic GNNs \cite{kipf2016semi}.

\section{Theoretical Analysis of YGNNs}
In this section, we theoretically verify that UYGCN can handle the aforementioned GNN issues, such as OSM, OSQ, and heterophily adaption.

\subsection{Avoiding OSM}\label{sec:avoiding_osm}
We start our analysis by showing that UYGCN can avoid the OSM problem. Specifically, the dynamics of Eq.~\eqref{eq:uygcn} can be shown to avoid OSM in the limit, i.e., not converging to a constant state. This can be read directly from the result that the constant state is not a stationary point of the dynamics. To see this, we consider normalized adjacency $\widehat{\mathbf A}_c = \mathbf D_c^{-1} (\mathbf A_c + \mathbf I)$ (for the symmetrically normalized adjacency, the idea is the same), where $(\mathbf D_c)_{ii} = \sum_j |(a_c)_{i,j}|$. Without loss of generality, consider a single feature $\mathbf h = [\mathbf x; \mathbf f]$ where $\mathbf x \in \mathbb R^{N}, \mathbf f \in \mathbb R^{K}$ represents the features of original nodes and added collapsing nodes.  Thus Eq.~\eqref{eq:uygcn} can be viewed as a system of $(\mathbf x, \mathbf f)$. In order for OSM to occur, there must exist a constant vector $\mathbf c = c \boldsymbol{1}_N$  such that $\mathbf x= \mathbf c$ as $t \rightarrow \infty$.

The next proposition shows that the constant state is not a limiting state of the dynamics we consider as long as there exists at least one training sample per class. This suggests OSM can be avoided.

\begin{prop}
\label{prop:oversmooth}
Suppose $K$ is equal to the number of classes and there exists at least one training sample per class.  Consider the Euler discretized dynamics of Eq.~\eqref{eq:uygcn} as the fixed point iteration $\mathbf h^{(\ell+1)} = \mathbf A_c \mathbf h^{(\ell)}$ where $\mathbf h\in\mathbb{R}^{N+K}$.  Then the limit $\mathbf h = (\mathbf c, \bar {\mathbf f})$ of the iteration, the stationary point, is not in the form of $\mathbf c = c \boldsymbol{1}_N\in\mathbb{R}^N$ and $\bar{\mathbf f} \in \mathbb R^{K}$.
\end{prop}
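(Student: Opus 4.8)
The plan is to characterise the stationary point directly rather than to study convergence. The limit $\mathbf h^\star = (\mathbf c,\bar{\mathbf f})$ of the iteration is, by definition, a fixed point of the (normalised, augmented) adjacency, i.e. $\widehat{\mathbf A}_c\,\mathbf h^\star = \mathbf h^\star$; so it suffices to show that no such fixed point has a block-constant original part except the trivial $\mathbf h^\star=\mathbf 0$. I would assume for contradiction that $\mathbf c = c\,\boldsymbol{1}_N$ with $c\neq 0$ (the $c=0$ case is the all-zero state and is not genuine over-smoothing; I address it at the end) and expand $\widehat{\mathbf A}_c\mathbf h^\star = \mathbf h^\star$ blockwise using $\mathbf A_c=\begin{bmatrix}\mathbf A & \mathbf C\\ \mathbf C^\top & \mathbf 0\end{bmatrix}$, with the self-loop variant treated identically and only changing constants below.

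Step one is the original-node block. Because $K$ equals the number of classes, a labelled node $i$ of class $k(i)$ has exactly one $+1$ (in column $k(i)$) and $K-1$ entries $-1$ in its row of $\mathbf C$, so $(\mathbf C\bar{\mathbf f})_i = 2\bar f_{k(i)} - S$ with $S:=\sum_\ell \bar f_\ell$; combined with $(\mathbf A\,c\boldsymbol{1}_N)_i = c\deg_{\mathcal G}(i)$ and the matching row of $\mathbf D_c$, the $i$-th scalar equation, after the $c\deg_{\mathcal G}(i)$ contributions cancel, collapses to $2\bar f_{k(i)} - S = cK$ (the precise normalisation constant is immaterial). Since every class contains a labelled node by hypothesis, this identity holds for all $k\in\{1,\dots,K\}$; differencing two instances forces $\bar f_k$ to be independent of $k$, say $\bar f_k\equiv\bar f$, and hence $(2-K)\bar f = cK$.

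Step two is the collapsing-node block. Row $N+k$ reads $c\sum_i C_{ik} + \bar f = (m+1)\bar f$, where $m$ is the number of labelled nodes and $\sum_i C_{ik} = m_k - (m-m_k) = 2m_k - m$ with $m_k$ the number of labelled nodes of class $k$; thus $c(2m_k - m) = m\bar f$ for every $k$, and since the right-hand side is $k$-independent while $c\neq 0$, all $m_k$ coincide, equal to $m/K$, giving $c(2-K) = K\bar f$. Eliminating $\bar f$ between $(2-K)\bar f = cK$ and $c(2-K) = K\bar f$ yields $c\bigl[(2-K)^2 - K^2\bigr] = 4c(1-K) = 0$, and since $K\geq 2$ this forces $c=0$, contradicting $c\neq 0$; back-substitution then gives $\bar{\mathbf f}=\mathbf 0$. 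Hence the only fixed point with block-constant original part is $\mathbf h^\star=\mathbf 0$, so the iteration cannot converge to a constant node-feature state and OSM is avoided.

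I expect the only real work, and the only place to slip, to be the bookkeeping: evaluating the absolute-value degree sums in $\mathbf D_c$ consistently (with or without the self-loops contributed by $\mathbf I$, and with the $\mathbf 0$ versus $\mathbf 1_{K\times K}$ corner), and counting the $\pm 1$ pattern of the rows and columns of $\mathbf C$ correctly; this is exactly where the hypotheses ``$K$ equals the number of classes'' and ``at least one training sample per class'' enter and cannot be dropped. The one conceptual point to flag in the final write-up is the degenerate solution $c=0$: it corresponds to the all-zero state, which should either be excluded by convention as not being the phenomenon OSM refers to, or dismissed by noting that the iteration is initialised at the generically non-degenerate feature matrix $\mathbf X$ and therefore does not converge to $\mathbf 0$.
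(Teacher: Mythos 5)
Your proof is correct and takes essentially the same route as the paper's: a contradiction argument built on the blockwise expansion of the fixed-point equation $\widehat{\mathbf A}_c\mathbf h = \mathbf h$, using the ``one training sample per class'' hypothesis to force $\bar f_1 = \cdots = \bar f_K =: \bar f$, and then combining the labelled-node and collapsing-node relations between $\bar f$ and $c$ to rule out a nonzero constant state. Your algebra is only cosmetically different (you push the collapsing-node equations over all $k$ to extract the class-balance constraint $m_k = m/K$ and arrive at $4c(K-1)=0$, whereas the paper considers a single collapsing node with $n^{\pm}$ and reads off the sign contradiction $n^+ < 0$), and your remark that the degenerate $c=0$ solution must be excluded separately is a legitimate refinement that the paper's own proof silently passes over.
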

\begin{proof}
The proof is by contradiction.  Suppose that the fixed point of its discrete iteration of Eq.~\eqref{eq:uygcn} is
$(\mathbf c, \bar{\mathbf f})$ with $\mathbf c = c\mathbf I_N$, then we have
\begin{equation*}
    \begin{cases}
         \mathbf c = \widehat{\mathbf A} \mathbf c + \widehat{\mathbf C}  \bar{\mathbf f},\\
         \bar{\mathbf f} = \widehat{\mathbf C}^\top \mathbf c + \mathbf D_f^{-1} \bar{\mathbf f},
    \end{cases}
\end{equation*}
where $\widehat{\mathbf A} = (\widehat{\mathbf A}_c)_{1:N, 1:N}$ and $\widehat{\mathbf C} = (\widehat{\mathbf A}_c)_{N:N+K, 1:N}$ and $\mathbf D_f \in \mathbb R^{K \times K}$ be the diagonal degree matrix of the added nodes. 
For each added node $k$, suppose we can find one sample $i \in \mathcal{V}_{\rm tr}$ such that $y_i = y_k$. Then by the first equation, we have 
\begin{align*}
    K c &= \bar f_1 - \sum_{k \neq 1} \bar f_k, \\
    Kc &= \bar f_2 - \sum_{k \neq 2} \bar f_k, \\
    \cdots \\
    Kc &= \bar f_K - \sum_{k \neq K} \bar f_k.
\end{align*}
This suggests $\bar f_1 = \bar f_2 = \cdots \bar f_K = \bar f =  -\frac{K}{K-2}c$. From the second equation, without loss of generality,  we consider a single added node with degree $d= 1 + n^+ + n^-$ where $n^+, n^-$ denote the number of training nodes with a same or different label as the considered node. Hence, this suggests
\begin{equation*}
    \bar f = \frac{1}{d} (n^+ - n^{-} )c + \frac{1}{d} \bar f,
\end{equation*}
which leads to $\bar f = \frac{n^+ - n^-}{n^+ + n^{-}} c = - \frac{K}{K-2} c$. This gives rise to a contradiction that  $n^+ = - \frac{2}{2K-1} n^- < 0$, because $K \geq 1$. The proof is complete.
\end{proof}
In addition, One can verify that on one specific time $t$, the dynamic in Eq.~\eqref{eq:uygcn} is the minimizer of the following energy

\begin{align*}
    E^{-}(\mathbf H) &= {\rm tr}(\mathbf H^\top (\mathbf I - \mathbf A_c ) \mathbf H) = \sum_{i,j} (\mathbf A_c)_{i,j} \| \mathbf h_i - \mathbf h_j \|^2 \\
    &= \sum_{(i,j) \in \mathcal{E}_0} \| \mathbf h_i - \mathbf h_j \|^2 + \sum_{(i,k) \in \mathcal E_1, y_i = y_k} \!\!\!\!\!\!\!\! \|\mathbf h_i - \mathbf h_k \|^2 
    -  \sum_{(i,k) \in \mathcal E_1, y_i \neq y_k} \!\!\!\!\!\!\!\! \| \mathbf h_i - \mathbf h_k \|^2.
\end{align*}
This provides an intuitive explanation for the behaviour of our proposed dynamics, i.e., pushing samples from the same class to its corresponding collapsing node, while maximizing their separation from other collapsing nodes. We further highlight that if we combine the energy $E^{-}(\mathbf H)$ with the double-well potential term, then the energy becomes the popular Ginzburg-Landau energy \cite{luo2017convergence}. However, $E^{-}(\mathbf H)$ can be negative depending on the graph spectra, we provide a more detailed discussion on this phenomenon regarding GNN heterophily adaption in Section \ref{sec:negative_lap_eigenvalues}.

\paragraph{Cluster Flocking}\label{sec:flocking}
At the beginning of Section \ref{sec:critera_ideal_universe}, we proposed the ideal evolution of the node features according to their labeling information, and we have verified that, based on our design, nodes propagated under UYGNNs move toward the consequences that we prefer to observe. Yet it is still unknown whether node features' asymptotic states align with the criteria we proposed before. In Appendix \ref{append: cluster_flocking}, 
we demonstrate that the UYGAT  in Eq.~\eqref{uygat} with double-well potential can asymptotically achieve the so-called \textit{cluster flocking} \cite{fang2019emergent} in which nodes with same labels will be clustered due to the attractive forces whereas nodes with different labels will be parted away because of the repulsive forces. However, it is unlikely for UYGCN to achieve cluster flocking. Similarly, determining the graph structure that guarantees node clustering flocking is a complex but exciting field to explore, and we leave it to future works.

\paragraph{Deal with Energy Explosion}
Similar to ACMP \cite{wang2022acmp}, the double-well potential helps to regulate the magnitude of Dirichlet energy, as shown in the following proposition. 

\begin{prop}
\label{prop:energy_explode}
Denote $\mathcal{E}_{\rm dir}(\mathbf h) = \mathbf h^\top \widehat {\mathbf L} \mathbf h$ the Dirichlet energy of the signal on graph $\mathcal{G}$. Then there exist a constant depending only on $N$ and the largest eigenvalue $\lambda_{\max}$ of $\mathbf L$, such that $\| \mathbf h\|_2^2 \leq C$ and $\mathcal{E}_{\rm dir}(\mathbf h) \leq \lambda_{\max} \| \mathbf h\|_2^2 \leq 2 \| \mathbf h\|_2^2 $, for some constant that depends on the node size.  
\end{prop}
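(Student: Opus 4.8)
The plan is to deduce the statement from two ingredients: a spectral bracket $0 \preceq \widehat{\mathbf L} \preceq 2\mathbf I$ for the CN-augmented normalized operator, and an a priori bound $\|\mathbf h(t)\|_2^2 \le C$ on the trajectory of the UYGNN dynamics equipped with the double-well potential. First I would record the spectral bracket. Writing $\mathbf B = \mathbf A_c + \mathbf I$, $(\mathbf D_c)_{ii} = \sum_j |\mathbf B_{ij}|$ and $g_i = h_i/\sqrt{(\mathbf D_c)_{ii}}$, a direct expansion gives $\mathbf h^\top\widehat{\mathbf L}\mathbf h = \tfrac12\sum_{i,j}\big(|\mathbf B_{ij}|(g_i^2+g_j^2) - 2\mathbf B_{ij}g_ig_j\big)$, and each summand equals $|\mathbf B_{ij}|(g_i-g_j)^2$ when $\mathbf B_{ij}\ge 0$ and $|\mathbf B_{ij}|(g_i+g_j)^2$ when $\mathbf B_{ij}<0$; hence $\widehat{\mathbf L}\succeq 0$, and the same completion-of-squares applied to $\mathbf I + \widehat{\mathbf A}_c$ gives $\widehat{\mathbf L}\preceq 2\mathbf I$, i.e. $\lambda_{\max}(\widehat{\mathbf L})\le 2$. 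The point worth stressing — and the one genuinely non-routine step — is that the collapsing-node edges carry negative weights, so this positivity would fail for the naive signed Laplacian $\mathbf I - \mathbf A_c$ (as already observed for $E^-$ in the text); taking absolute values in the degree normalization is exactly what restores it.

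Next I would establish the a priori bound. Write the dynamics with the double-well term as $\dot{\mathbf h} = -\widehat{\mathbf L}\mathbf h + \delta\,\mathbf h\odot(\mathbf 1 - \mathbf h\odot\mathbf h)$ and set $u(t) = \|\mathbf h(t)\|_2^2$. Differentiating along the flow,
\[
\tfrac{d}{dt}u = -2\,\mathbf h^\top\widehat{\mathbf L}\mathbf h + 2\delta\big(\|\mathbf h\|_2^2 - \|\mathbf h\|_4^4\big) \;\le\; 2\delta\,u - 2\delta\|\mathbf h\|_4^4,
\]
where the diffusion term was dropped using $\widehat{\mathbf L}\succeq 0$ (if one does not want to invoke positivity, it is bounded instead by $2|\lambda_{\min}(\widehat{\mathbf L})|\,u$, changing only constants). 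By Cauchy–Schwarz, $\|\mathbf h\|_4^4 \ge \|\mathbf h\|_2^4/N = u^2/N$ (replace $N$ by $N d_\ell$ for $d_\ell$-dimensional features), so $\dot u \le 2\delta u - (2\delta/N)u^2$. This is a logistic differential inequality with stable equilibrium $u^\star = N$ (or a multiple of $N$ under the weaker bound), so a comparison argument yields $u(t) \le \max\{u(0),\,u^\star\} =: C$ for all $t$; since the vector field is polynomial and thus locally Lipschitz, this also gives global existence of the flow. This $C$ depends only on the node size and the initial feature — and, if the weaker spectral bound is used, on $\lambda_{\max}$ — which gives the first claim; it is precisely the role of the double-well potential to supply the dissipative quartic term that prevents this explosion.

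Finally, the Dirichlet-energy bound is immediate: by the Rayleigh quotient $\mathcal E_{\rm dir}(\mathbf h) = \mathbf h^\top\widehat{\mathbf L}\mathbf h \le \lambda_{\max}(\widehat{\mathbf L})\,\|\mathbf h\|_2^2$, and $\lambda_{\max}(\widehat{\mathbf L})\le 2$ from the first step, so $\mathcal E_{\rm dir}(\mathbf h)\le 2\|\mathbf h\|_2^2 \le 2C$ uniformly in $t$. I expect the main obstacle to be the negative-weight bookkeeping in the spectral bracket — handling self-loops and the exact form of $\mathbf D_c$ carefully so that the per-edge squares genuinely close — while the remainder is the classical logistic comparison argument already used for ACMP in \cite{wang2022acmp}.
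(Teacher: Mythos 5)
Your proof is correct and follows essentially the same route as the paper, which simply defers the a priori bound on $\|\mathbf h\|_2^2$ to Proposition 1 of \cite{wang2022acmp} and then applies the Rayleigh quotient with $\lambda_{\max}\le 2$. You have additionally spelled out two details the paper leaves implicit: the logistic comparison $\dot u \le 2\delta u - (2\delta/N)u^2$ coming from the double-well term, and the completion-of-squares showing $0\preceq\widehat{\mathbf L}\preceq 2\mathbf I$ \emph{even with signed edges} once degrees are taken as $\sum_j|\mathbf B_{ij}|$ — the latter is a genuinely necessary observation since the paper's bound $\lambda_{\max}\le 2$ is not the textbook fact for a signed graph.
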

\begin{proof}
By using the similar way in proving Proposition 1 of \cite{wang2022acmp}, we can prove that $\| \mathbf h\|^2_2$ is bounded by a constant $C$ depending on the graph size and the largest eigenvalue of the Laplacian $\mathcal{L}$. Therefore $\mathcal{E}_{\rm dir}(\mathbf h) \leq \lambda_{\max} \| \mathbf h \|_2^2 \leq 2C$. 
\end{proof}

\begin{tcolorbox}
{\textit{\textbf{Takeaway message of this section:} UYGNN avoids OSM problem by introducing repulsive forces between nodes, and similar to the method leveraged in particle physics, double-well potential can be used to prevent the model from energy explosion.} }
\end{tcolorbox}

\subsection{Role of Laplacian Negative Eigenvalues and Heterophily Adaption}\label{sec:negative_lap_eigenvalues}

In this section, we delve deeper into the impact of the negative weighted edges on the graph spectra. Specifically, one can define the graph Laplacian $\mathbf L_c = \mathbf E \mathbf W_c \mathbf E^\top$ \cite{chen2016characterizing}, where $\mathbf E \in \mathbb R^{(N+K)\times |\widehat{\mathcal E}|}$
is the incidence matrix and $\mathbf W_c \in \mathbb R^{|\widehat{\mathcal E}|\times |\widehat{\mathcal E}|} $ is a diagonal matrix with entries of edge weights . Based on the form of the dynamics of UYGCN in Eq.~\eqref{eq:uygcn}, without considering the double-well potential term, one can explicitly write out the solution of the differential equation as\footnote{We further assume $\mathbf L_c$ has distinct eigenvalues.} 

\begin{align}\label{UYGCN_solution}
    \mathbf H(t) = \mathrm{e}^{-t\mathbf L_c}\mathbf H(0) = \mathbf U \mathrm e^{-\boldsymbol{\Lambda}_c t} \mathbf U^\top \mathbf H(0),
\end{align}
where $\boldsymbol{\Lambda}_c$ is the diagonal matrix with entries of eigenvalues of the Laplacian. By assuming a discrete integer time $t$ and replacing it with the number of layer $\ell$, one can have the solution of the UYGCN, which serves as the discretized version of the dynamics in Eq.~\eqref{eq:uygcn}. Now, as we have included negative edge weights in $\mathbf A_c$, $\mathbf L_c$ could have negative eigenvalues \cite{chen2016characterizing}. We highlight that this is exactly what UYGNN needs to fit heterophily graphs. Since from Eq.~\eqref{UYGCN_solution}, one can treat $\mathrm{e}^{-\boldsymbol{\Lambda}_c}$ as a spectral filter, and if the entries of $\boldsymbol{\Lambda}_c \geq 0$, then $\mathrm{e}^{-\boldsymbol{\Lambda}_c}$ serves as a low-pass filtering function (monotonic decrease), which tends to homogenize the components of node features, whereas in the case when all $(\boldsymbol{\Lambda}_c)_{ii} < 0$ it becomes a high-pass filtering function (monotonic increase), which imposes sharpening effect on nodes features. Let $\mathcal G_c$ be the graph with CNs additionally added, $\mathcal V_c$ be the set of nodes of $\mathcal G_c$, and $\mathcal T(\mathcal V_c)$ be the training set. The following theorem provides insights into its Laplacian spectrum. 
\begin{thm}\label{eigen_value}
    Assuming every additionally added CN has at most one node in the training set $\mathcal T(\mathcal V_c)$ that shares the same label, then the number of negative eigenvalues of $\mathbf L_c \in \mathbb R^{(N+K)\times (N+K)}$ is between 0 and $\frac{K(K-1)}{2} + K(|\mathcal T(\mathcal V_c)|-1)$. 
\end{thm}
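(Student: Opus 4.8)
The plan is to analyze the structure of $\mathbf L_c = \mathbf E \mathbf W_c \mathbf E^\top$ by decomposing it into the contribution from the original (positively weighted) edges in $\mathcal E_0$ and the contribution from the CN edges in $\mathcal E_1$. The original-edge block contributes a standard (positive semidefinite) graph Laplacian, so it cannot create any negative eigenvalues on its own; all negativity must come from the CN edges, of which there are exactly $K$ of negative sign per CN at most — more precisely, the negative edges attached to CN $k$ join $k$ to the training nodes whose label differs from $\tilde y_k$. The key quantitative handle will be an inertia/rank argument: writing $\mathbf L_c = \mathbf L_c^{+} + \mathbf N$ where $\mathbf L_c^{+}\succeq 0$ collects all positively weighted edges and $\mathbf N$ collects the negatively weighted CN edges, I would bound the number of negative eigenvalues of $\mathbf L_c$ by the number of negative eigenvalues of $\mathbf N$ via Weyl's inequalities (a rank-$r$ perturbation that is negative semidefinite of rank $r$ can push down at most $r$ eigenvalues below zero; more carefully, $n_-(\mathbf L_c)\le n_-(\mathbf N)$ when the complementary part is PSD).

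The next step is to compute $n_-(\mathbf N)$, or rather an upper bound on it. Each negatively weighted edge $(i,k)\in\mathcal E_1$ contributes the rank-one indefinite term $-(\mathbf e_i - \mathbf e_k)(\mathbf e_i-\mathbf e_k)^\top$, which is negative semidefinite of rank one, so naively $n_-(\mathbf N)$ is at most the total number of negative CN edges, i.e. $\sum_k |\{i \in \mathcal T(\mathcal V_c): y_i \neq \tilde y_k\}|$. The stated bound $\frac{K(K-1)}{2} + K(|\mathcal T(\mathcal V_c)|-1)$ is sharper, and obtaining it is where the hypothesis "every CN has at most one training node sharing its label" enters. Under that hypothesis, for each $k$ the number of negative CN edges is $|\mathcal T(\mathcal V_c)| - (\text{number of same-label training nodes}) \ge |\mathcal T(\mathcal V_c)| - 1$; summing the trivial bound gives $K\cdot|\mathcal T(\mathcal V_c)|$, which is \emph{larger} than the claimed bound, so a cruder count is not enough. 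I would instead observe that the negative-edge subgraph on the $K$ CNs plus $|\mathcal T(\mathcal V_c)|$ training nodes has a rank that is controlled by its number of vertices minus its number of connected components rather than its number of edges: the negative Laplacian piece $\mathbf N = -\mathbf E_{-}\mathbf E_{-}^\top$ has rank equal to $(\#\text{vertices touched}) - (\#\text{components of the negative subgraph})$. Counting: the negative subgraph involves at most the $K$ CNs and the $\le |\mathcal T(\mathcal V_c)|$ training nodes, and since the CNs are pairwise connected through common training neighbors it is typically connected, giving rank at most $K + |\mathcal T(\mathcal V_c)| - 1$. To reach $\frac{K(K-1)}{2} + K(|\mathcal T(\mathcal V_c)|-1)$ one should not use $\mathbf N$ monolithically but decompose the indefinite contribution CN-by-CN while reusing the original-node coordinates — carefully bookkeeping which inner products among the rank-one terms are already "spent" — so that the $k$-th CN adds at most $(k-1) + (|\mathcal T(\mathcal V_c)|-1)$ genuinely new negative directions, and summing $\sum_{k=1}^K\big[(k-1)+(|\mathcal T(\mathcal V_c)|-1)\big] = \frac{K(K-1)}{2} + K(|\mathcal T(\mathcal V_c)|-1)$.

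The lower bound of $0$ is immediate: if all training nodes happen to carry labels matching one of finitely many CNs in a configuration where no negative edge is present — or more simply, in the limiting all-positive case — $\mathbf L_c$ is a genuine PSD graph Laplacian with smallest eigenvalue $0$, so it has no negative eigenvalues; one should exhibit one such instance (e.g.\ a single class, $K=1$, one training node) to justify attainability of $0$.

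\textbf{Main obstacle.} The hard part is the sharp constant $\frac{K(K-1)}{2} + K(|\mathcal T(\mathcal V_c)|-1)$ rather than the looser $K|\mathcal T(\mathcal V_c)|$: it forces an argument that the negative directions contributed by different CNs overlap substantially (the $\binom{K}{2}$ term is exactly the number of CN pairs, hinting that one extra negative direction is "born" for each pair of CNs rather than for each negative edge), so I expect the bulk of the work to be an inductive inertia count over $k=1,\dots,K$ in which I track the span of the already-introduced $(\mathbf e_i-\mathbf e_k)$ vectors and argue that adding CN $k$ enlarges the negative-definite subspace by at most the claimed increment, using the at-most-one-same-label hypothesis to control how many fresh training-node coordinates CN $k$ can touch. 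Formalizing "how many of CN $k$'s rank-one terms are linearly independent modulo the previous ones" via the incidence/cycle structure of the negative subgraph is the step most likely to need care.
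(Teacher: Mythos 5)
Your core strategy --- write $\mathbf L_c = \mathbf L_c^+ + \mathbf N$ with $\mathbf L_c^+ \succeq 0$ the positive-edge Laplacian and $\mathbf N \preceq 0$ the negative-edge contribution, then bound $n_-(\mathbf L_c) \le n_-(\mathbf N) = \operatorname{rank}(\mathbf N)$ via an inertia/Weyl argument --- is a sound, from-scratch route to the fact the paper simply cites (Theorem~7 of Song et al., 2019: the number of negative eigenvalues of a signed Laplacian is at most the number of negatively weighted edges). Since $\mathbf N = -\mathbf E_- \mathbf E_-^\top$, one has $\operatorname{rank}(\mathbf N) = \operatorname{rank}(\mathbf E_-) \le |\mathcal E^-|$, where $|\mathcal E^-|$ is the number of negative edges, and that is already the entire spectral half of the proof.

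Where you go astray is in the edge count and in your reading of the target bound. You assert that the claimed bound $\tfrac{K(K-1)}{2} + K(|\mathcal T(\mathcal V_c)|-1)$ is \emph{sharper} than the crude count of negative CN--training edges and therefore requires a delicate inductive inertia argument tracking linear dependencies. It is not sharper --- it is exactly $|\mathcal E^-|$. You have omitted the $\binom{K}{2}$ CN-to-CN negative edges: the paper's proof treats the CNs as pairwise connected by negatively weighted edges (they carry distinct labels), which contributes the $\tfrac{K(K-1)}{2}$ term, while each CN also carries at most $|\mathcal T(\mathcal V_c)|-1$ negative edges into the training set, contributing $K(|\mathcal T(\mathcal V_c)|-1)$. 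Summing gives the stated constant directly, and the crude bound $\operatorname{rank}(\mathbf E_-) \le |\mathcal E^-|$ finishes it. Your proposed CN-by-CN induction over ``genuinely new negative directions'' is thus unnecessary machinery chasing a constant that is not tight in the rank sense at all (indeed, as you yourself note, a vertices-minus-components rank bound would be far smaller); it is only an edge count. You also half-notice, but do not flag, that the per-CN bound $\le |\mathcal T(\mathcal V_c)|-1$ on negative CN--training edges requires ``\emph{at least} one training node shares the CN's label,'' not ``at most one'' as the theorem statement reads; your own inequality correctly comes out as $\ge |\mathcal T(\mathcal V_c)|-1$ under ``at most one,'' which is the wrong direction for an upper bound, and this discrepancy deserves explicit mention rather than being silently absorbed into the induction you sketch.
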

\begin{proof}
Denote $\mathcal{E}^{-}$ all the negative edges in the augmented graph $\mathcal{G}_c$. Assume that every additional CN has at least one node sharing identical label, then the number of newly added negatively weighted edges connecting the CN would be less than $|\mathcal T(\mathcal V_c)-1|$. Since CNs are themselves connected with negative edges, we have $K(K-1)/2$ number of such edges. Therefore the total number of negatively weighted edges $|\mathcal{E}^{-}| \leq K(K-1)/2 + K|\mathcal T(\mathcal V_c)-1|$. Theorem 7 in \cite{song2019extension} shows that $\mathbf L_c$ has at most $|\mathcal{E}^{-}|$ negative eigenvalues, which is less than $K(K-1)/2 + K|\mathcal T(\mathcal V_c)-1|$ negative eigenvalues. This completes the proof.  
\end{proof}

One important observation from Theorem \ref{eigen_value} is although edges with negative weights are included, $\mathbf L_c$ can still be SPD, i.e., $(\boldsymbol{\Lambda})_{ii} \geq 0$ for all $i$. In this case, based on Eq.~\eqref{UYGCN_solution}, $\mathrm{e}^{-\boldsymbol{\Lambda}}$ will remain as a low-pass filtering
This suggests that UYGCN inherits the feature smoothing property of the classic GCNs \cite{kipf2016semi,velivckovic2018graph}. 
On the other hand, under the scenario such that $\frac{K(K-1)}{2} + K(|\mathcal T(\mathcal V_c)|-1) > N+K$, meaning all eigenvalues of 
$\mathbf L_c $ are negative,
then  $\mathrm{e}^{-\boldsymbol{\Lambda}}$ will behave like a high-pass filtering function. This indicates that UYGCN can induce both smoothing and sharpening dynamics and is thus able to fit both homophilic and heterophilic graphs. Furthermore, the necessary and sufficient condition for the definiteness of $\mathbf L_c$ is closely related to the topology of the so-called electrical network \cite{chen2016characterizing} of the input graph and is out of the scope of this paper, so we omit it here.

\begin{tcolorbox}
{\textit{\textbf{Takeaway message of this section:} 
Avoiding the OSM problem naturally leads UYGNN to handle heterophilic graphs. With those negative eigenvalues in the graph spectra induced by the CNs, UYGNN can induce both smoothing (from the low-pass filtering component) and sharpening (from the high-pass filtering component) effects on the node features, thus adapting both homophilic and heterophilic graphs.
} }
\end{tcolorbox}

\subsection{Deal with OSQ}
As CNs bring additional connectivities to the original graph, it is natural to explore the impact of these newly added edges on the so-called OSQ problem of GNNs. The OSQ issue of GNNs can be viewed as a type of information compression problem such that a large amount of feature information is compressed into a narrow path due to the graph topology, causing GNNs to fail to capture the long-term relationship between nodes \cite{topping2021understanding,black2023understanding}. In this section, we measure the OSQ problem via the sensitivity score leveraged in \cite{topping2021understanding,black2023understanding,shi2023exposition}, and show that with the help of CNs, UYGNNs could increase the OSQ score upper bound thus mitigate the OSQ problem. We further highlight that despite CNs bringing both attractive and repulsive forces by assigning different signs on edges, from the information transaction point of view, both types of these additional edges make the information ``easier'' to communicate. Accordingly, it is natural to measure the OSQ problem via $|\mathbf A_c|$ in which additional edges with negative weights are replaced by their absolute values. It is worth noting that, for the sake of convenience in analysis, we only consider one channel mixing matrix (i.e., $\mathbf W_{\mathrm{in}}$), and our conclusion can be easily extended to the case for both $\mathbf W_{\mathrm{in}}$ and $\mathbf W_{\mathrm{out}}$.

\begin{prop}\label{UYGNN_OSQ}
   Consider the UYGNN paradigm defined in Eq.~\eqref{eq: mlp_in} and Eq.~\eqref{eq: propagation}. 
    Assuming $\mathcal G$ is not a bipartite graph,
    let $i,s \in \mathcal V$, if
    $|\sigma'_\ell| \leq \alpha$, $\| \mathbf W_{\mathrm{in}}^{(\ell)}\| \leq \beta_{\mathrm{in}}$ for $0 \leq \ell \leq r$, then 
    \begin{align}\label{OSQ_score}
        \left \|\frac{\partial \mathbf{h}_i^{(r)}}{\partial \mathbf x_s}\right \| \!\leq \!(2\alpha\beta_{\mathrm{in}})^{r} \left(\sum_{\ell=0}^r |\mathbf A_c|^\ell\right)_{i,s}.
    \end{align}     
\end{prop}
\begin{proof}[Proof of Proposition \ref{UYGNN_OSQ}]
   The proof follows directly from the process used in \cite{black2023understanding}. If there is no CN either from the nodes of the original graph or additionally added, the above equation is reduced to the conclusion provided in \cite{black2023understanding}. To show the inclusion of CNs can actually increase the OSQ upper bound, if CNs are selected from the original graph, i.e., the block $\mathbf C$ in $\mathbf A_c$ disappears, and $|\mathbf A_c|$ becomes denser compared to $\mathbf A$ due to newly added edges. Therefore it naturally mitigates the OSQ problem according to Eq.~\eqref{OSQ_score}.
   More specifically, one can verify that if these CNs (selected from the original graph) are not connected to any other nodes, then the maximum number of edges added is  $K|\mathcal T(\mathcal V)-1|$ where we let $|\mathcal T(\mathcal V)|$ be the number of nodes in the training set in which all nodes labeling information are available. In the worst case, if all of these types of CNs are connected and connected to all other nodes in the training set, then based on Eq.~\eqref{OSQ_score}, same upper bound can be ensured. Similar reasoning can be conducted when CNs are additionally added to the graph, and the number of edges added is fixed as $K|\mathcal T(\mathcal V)| + \frac{K(K-1)}{2}$. 
   \end{proof}
   
   Lastly, it is worth noting that in our proof and practical implementations, we only considered two cases that is (1) All CNs are sourced from the graph, and (2) All CNs are new to the graph. For the cases where the source of CNs is mixed, we omit it here for simplicity. 

\begin{tcolorbox}
{\textit{\textbf{Takeaway message of this section:} 
UYGNN naturally handles OSQ problems with its graph rewiring scheme, which makes information transactions ``easier'' than before. We further highlight that the OSQ problem purely depends on the sensitivity between node features rather than the sign of the edge weights based on Eq.~\eqref{OSQ_score}.} }
\end{tcolorbox}

\subsection{Impact Model's Rewiring Scheme on Curvatures and Trade-off Between OSM and OSQ}
Recent studies have verified that the trade-off between OSM and OSQ issues is closely related to the so-called discretized curvatures defined on the edge of the graph \cite{fesser2023mitigating,giraldo2022understanding}. Since additional edges induced by the CNs enrich the higher order structure of the graph i.e., triangles and 4-cycles, thus the direct consequence of this is that the curvature on the edge becomes larger. For example, consider the Augmented Forman curvature \cite{fesser2023mitigating} defined as 
$$FC_3(i,j):= 4- d_i - d_j + 3|\sharp_\Delta(i,j)|,$$ 
where $|\sharp_\Delta(i,j)|$ denote as the number of triangles that contain edge $(i,j)$. If we add one CN, say node $k$, into the graph, both node $i$ and $j$ will be rewired to connect to the CN, thus naturally we have one additional triangle in the graph. Therefore, the corresponding curvature $FC'_3(i,j)$ on the edge $(i,j)$ becomes $FC'_3(i,j) = 4-(d_i+1)-(d_j+1) + 3(|\sharp_\Delta(i,j)|+1)$ which is larger than the original $FC_3(i,j)$. Similar conclusions can be drawn when one considers higher-order structures such as 
4-cycles
Since negative curvatures are responsible for the OSQ problems \cite{topping2021understanding,fesser2023mitigating,giraldo2022understanding}, and the inclusion of CNs increases curvatures by its induced additional connectivities, therefore naturally mitigates the OSQ problem. 

Regarding the OSM issue, intuitively, one can consider the following four situations between original graph nodes $i,j$, and CNs: (1) The additional CN (denoted as node $k$ for short) has the same label with both nodes $i,j$; (2) The label of node $k$ only aligns with one node either from node $i$ or $j$; (3) Nodes $i$ and $j$ shares the same label whereas node $k$ is with a different label; (4) All three nodes have different labels. Based on the analogy between GNN propagation and particle systems, one can find that only in situation (1), all three nodes will eventually share the same feature, which is preferred due to their labeling information. In all other situations, nodes with different labels will be pushed away by repulsive forces induced by CNs, which further dilutes the non-desirable attractive forces from their original neighbor that has different labels, suggesting a better fitness compared to the system with no repulsive force, and an analogy of continuous shape deformation of surfaces \cite{jin2008discrete}. However, to appropriately quantify the impact of CNs on the graph curvature as well as the trade-off relation between OSM and OSQ, one shall be required to re-define the curvature over the graph with negatively weighted edges (i.e., sign graphs), we leave this as future work.

\begin{tcolorbox}
{\textit{\textbf{Takeaway message of this section:} 
We show the impact of the graph rewiring scheme in UYGNN on the graph (Forman) curvature, which often serves as an indicator for OSM and OSQ problems. The CNs in UYGNN induce more higher-order graph structures (i.e., triangles, 4-cycles) for mitigating the OSQ problem, and those negatively weighted edges induced from CNs make UYGNN avoid the OSM problem.} }
\end{tcolorbox}

\section{Experiment}

In this section, we conducted numerical experiments to test our proposed models. Specifically, in Section \ref{normal_node_classification}, we test UYGNNs via three homophilic (\texttt{Cora, Citeseer, Pubmed}) and three heterophilic graphs (\texttt{Cornell, Texas, Wisconsin}) and one large-scale benchmark (\texttt{Ogbn-Arxiv}) \cite{hu2020open}. In addition, we also test our models' performance via the increase of layers to verify its robustness on the OSQ problem as aforementioned in Section \ref{sec:avoiding_osm}. Further in Section \ref{sensitivity}, we analyze the sensitivity of our model to the number of CNs. In addition, in Section \ref{experiment_lrgb}, we show the performance of UYGNNs over long-range graph benchmarks (LRGB) provided in \cite{dwivedi2022long} to verify its advantage on the OSQ problem. We provide more experimental details in Appendix \ref{append:experimental_details}. {All of our experiments are conducted in Python 3.11 with Pytorch Geometric on the Gadi supercomputing system from National Computational Infrastructure (NCI) Australia, which contains 2 nodes of the NVIDIA DGX A100 system, with 8 A100 GPUs per node.}

\subsection{Node Classification on Homophily/Heterophily Graphs}\label{normal_node_classification}

\paragraph{Setup}
For the settings in UYGNNs, we initially choose CNs by adding $K$
nodes with their features generated by one layer $\mathrm{MLP}$. For UYGCN, the form of $\mathbf A_c$ is used throughout the whole training process, while in UYGAT, the (single-head)
attention mechanism is conducted to re-weight $\mathbf A_c$ with attention coefficients \cite{velivckovic2018graph}. For both UYGCN and UYGAT, we normalized $\mathbf A_c$ (with self loop) with $\mathbf D_c^{-\frac12} \mathbf A_c \mathbf D_c^{-\frac12}$,
where $(\mathbf D_c)_{ii} = \sum_j |(a_c)_{i,j}|$. Both UYGCN and UYGAT are implemented with two layers. The grid search approach is conducted for fine-tuning model hyperparameters. 
Both methods are trained with the ADAM optimizer. The maximum number of epochs is 200 for citation networks and heterophilic graphs, whereas 500 for \texttt{Ogbn-Arxiv}. The descriptive statistics of the included datasets with their homophily indices are listed in Table.~\ref{data_statistics}. All the datasets follow
the standard public split and processing rules. The average test accuracy and its standard deviation come from 10 runs. 
\begin{table}[h]
\centering
\caption{Statistics of the datasets, $\mathcal H(\mathcal G)$ represent the level of homophily. \\}
\label{data_statistics}
\setlength{\tabcolsep}{8pt}
\renewcommand{\arraystretch}{1}
    \begin{tabular}{ccrrrr}
    \hline
         Datasets & Class & \multicolumn{1}{c}{Feature} &  Node\ \  & Edge\ \ & \multicolumn{1}{c}{$\mathcal H(\mathcal G)$}\\
         \hline
        Cora & 7 & 1433 & 2708 & 5278  & 0.825\\
        Citeseer & 6 & 3703 & 3327 & 4552   & 0.717\\
        PubMed & 3 & 500 & 19717 & 44324  & 0.792\\
        Arxiv & 23 & 128 & 169343 &  1166243   & 0.681\\
        
        Wisconsin &5 &251 &499 &1703  &0.150\\
        Texas &5 &1703 &183 &279 &0.097\\
        Cornell &5 &1703 &183 &277  &0.386\\
        \hline
    \end{tabular}   
\end{table}

\begin{table*}[t]
\centering
\caption{Performance of UYGCN and UYGAT on homophilic, heterophilic, and large scale graph dataset (Arxiv). The best performance is highlighted in \textbf{bold}, and the second best performance is underlined. }
\label{table:node_classification}
\setlength{\tabcolsep}{9pt}
\renewcommand{\arraystretch}{1.5}
\vskip 0.15in
\scalebox{0.75}{
\begin{tabular}{clllllll}
\hline
Methods    & \multicolumn{1}{c}{Cora} & \multicolumn{1}{c}{Citeseer} & \multicolumn{1}{c}{Pubmed} & Cornell & Texas & Wisconsin & Arxiv \\ \hline
MLP       &55.1\(\pm\)1.3                 &59.1\(\pm\)0.5                             &71.4\(\pm\)0.4                            &\underline{91.3\(\pm\)0.7}         &\textbf{92.3\(\pm\)0.7}       &\underline{91.8\(\pm\)3.1}          &55.0\(\pm\)0.3   \\ 
GCN       &81.5\(\pm\)0.5                  &70.9\(\pm\)0.5                              &79.0\(\pm\)0.3                            &66.5\(\pm\)13.8         &75.7\(\pm\)1.0       &66.7\(\pm\)1.4           &72.7\(\pm\)0.3        \\
GAT      &83.0\(\pm\)0.7 &72.0\(\pm\)0.7                              &78.5\(\pm\)0.3                            &76.0\(\pm\)1.0         &78.8\(\pm\)0.9       &71.0\(\pm\)4.6           &72.0\(\pm\)0.5      \\
GIN  &78.6\(\pm\)1.2                         &71.4\(\pm\)1.1                              &76.9\(\pm\)0.6                            &78.0\(\pm\)1.9         &74.6\(\pm\)0.8       &72.9\(\pm\)2.5           &64.5\(\pm\)2.5       \\

APPNP  &83.5\(\pm\)0.7                         &\underline{75.9\(\pm\)0.6}                              &79.0\(\pm\)0.3                           &\textbf{91.8\(\pm\)0.6}         &83.9\(\pm\)0.7       &92.1\(\pm\)0.8           &70.3\(\pm\)2.5\\

H2GCN  &83.4\(\pm\)0.5                         &73.1\(\pm\)0.4                              &79.2\(\pm\)0.3                            &85.1\(\pm\)6.1         &85.1\(\pm\)5.2       &87.9\(\pm\)4.2           &72.8\(\pm\)2.4        \\

GPRGNN  &83.8\(\pm\)0.9                         &75.9\(\pm\)1.2                              &79.8\(\pm\)0.8                            &85.0\(\pm\)5.2         &75.9\(\pm\)9.2       &90.4\(\pm\)3.0           &70.4\(\pm\)1.5        \\

LEGCN  &81.9\(\pm\)2.1                         &73.2\(\pm\)1.4                              &77.4\(\pm\)0.5                            &81.2\(\pm\)3.6         &81.8\(\pm\)2.9       &71.5\(\pm\)1.3           &\underline{73.3\(\pm\)0.3}   \\

Replusion  &82.3\(\pm\)0.8                         &71.9\(\pm\)0.5                              &79.3\(\pm\)1.2                            &86.3\(\pm\)2.8         &83.9\(\pm\)1.5       &86.6\(\pm\)4.2           &71.9\(\pm\)0.4   \\

GRAND     &82.9\(\pm\)1.4                          &70.8\(\pm\)1.1                              &79.2\(\pm\)1.5                            &72.2\(\pm\)3.1         &80.2\(\pm\)1.5       &86.4\(\pm\)2.7           &71.2\(\pm\)0.2       \\
SJLR      &81.3\(\pm\)0.5                     &70.6\(\pm\)0.4                              &78.0\(\pm\)0.3                            &71.9\(\pm\)1.9         &80.1\(\pm\)0.9       &66.9\(\pm\)2.1           &72.0\(\pm\)0.4       \\
ACMP  &84.6\(\pm\)0.5                     &75.0\(\pm\)1.0                              &78.0\(\pm\)0.3                            &84.3\(\pm\)4.8         &85.4\(\pm\)4.2       &87.8\(\pm\)3.3           &68.9\(\pm\)0.3  \\

\hline
UYGCN      &\textbf{84.8\(\pm\)0.3 }                     &75.2\(\pm\)0.4                          &  \textbf{79.9\(\pm\)0.5}                          &85.7\(\pm\)1.6      &88.9\(\pm\)1.5        &\textbf{93.6\(\pm\)2.7 }         &\textbf{74.4\(\pm\)0.9 }          \\

UYGAT      &84.0\(\pm\)0.4                         &\textbf{76.1\(\pm\)0.8}                              &\underline{79.6\(\pm\)1.5}                            &87.4\(\pm\)1.3         &\underline{89.8\(\pm\)2.5}       &89.9\(\pm\)1.8           &72.3\(\pm\)0.3\\
 \hline
\end{tabular}}
\end{table*}

\paragraph{Baselines}
We compare UYGNNs with various baseline models. Except for those classic baseline models such as GCN \cite{kipf2016semi} and GAT \cite{velivckovic2018graph}, we also include some recent works such as LEGCN \cite{yu2022label}, Repulsive GNN \cite{gao2022repulsion}, which respectively leverage labeling information and repulsive forces to enhance GNNs performances. Furthermore, we also include SJLR \cite{topping2021understanding}, which serves as the initial work of measuring the OSQ problem and resolving it using the graph rewiring paradigm. Finally, we include ACMP, which is the first analogizing the dynamic of GNNs as an evolution of particle systems. All baseline performances are retrieved from public results, and if the results are not available, we implement baseline models to produce learning accuracy with our best effort. 

\paragraph{Results}
We report the accuracy score percentage with the top 2 highlighted in Table \ref{table:node_classification}. One can find that both UYGCN and UYGAT achieved remarkable learning accuracy compared to the baseline models, especially when compared to their original counterparts, i.e., UYGCN compared to GCN and UYGAT compared to GAT. This observation suggests that our approach of leveraging node labeling information and repulsive force is a model agnostic method to enhance classic GNNs. Furthermore, our models show a good fit on both homophilic and heterophilic datasets thanks to the repulsive forces and the definiteness of graph Laplacian in Theorem \ref{eigen_value}. Lastly, one can observe that UYGAT is, in general with better performances compared to UYGCN in terms of heterophilic graphs; this supports our claim on its\textit{ bi-cluster flocking} property that we mentioned in Section \ref{sec:flocking}. 

\paragraph{Additional experiment on OSM}
In Section \ref{sec:avoiding_osm}, we demonstrated that our UYGNNs can circumvent the OSM issue. In this section, we verify this claim by increasing the number of layers of our models from 2 to 10, which is the same as the range of layers mentioned in ACMP \cite{wang2022acmp}. In addition, similar to the work in ACMP \cite{wang2022acmp}, we compared our models with baselines of GCN \cite{kipf2016semi}, GAT\cite{velivckovic2018graph}, GRAND \cite{thorpe2022grand}, and ACMP-GCN \cite{wang2022acmp}. All experimental settings are consistent with the initial setups that are aforementioned. Furthermore, we note that in this experiment we only include homophilic graphs as \texttt{Cora, Citeseer, and Pubmed} since the smoothing effects induced from the baselines (i.e., GCN, GAT, and GRAND) are essentially not suitable for fitting the heterophilic graphs \cite{han2022generalized,han2023continuous}, which sharpening effects on node features are required. The learning accuracy is listed in Figure \ref{fig:osm}. One can check that all three baselines with considerable accuracy decrease via the increase of the layers, whereas both ACMP-GCN and UYGCN maintain relatively high learning accuracy with only slight decreases. This observation directly verifies the functionality of repulsive forces in terms of circumventing the over-smoothing issue.  

\begin{figure*}[ht]
    \centering
    \includegraphics[width = 0.99
    \textwidth, height = 0.4\textwidth]{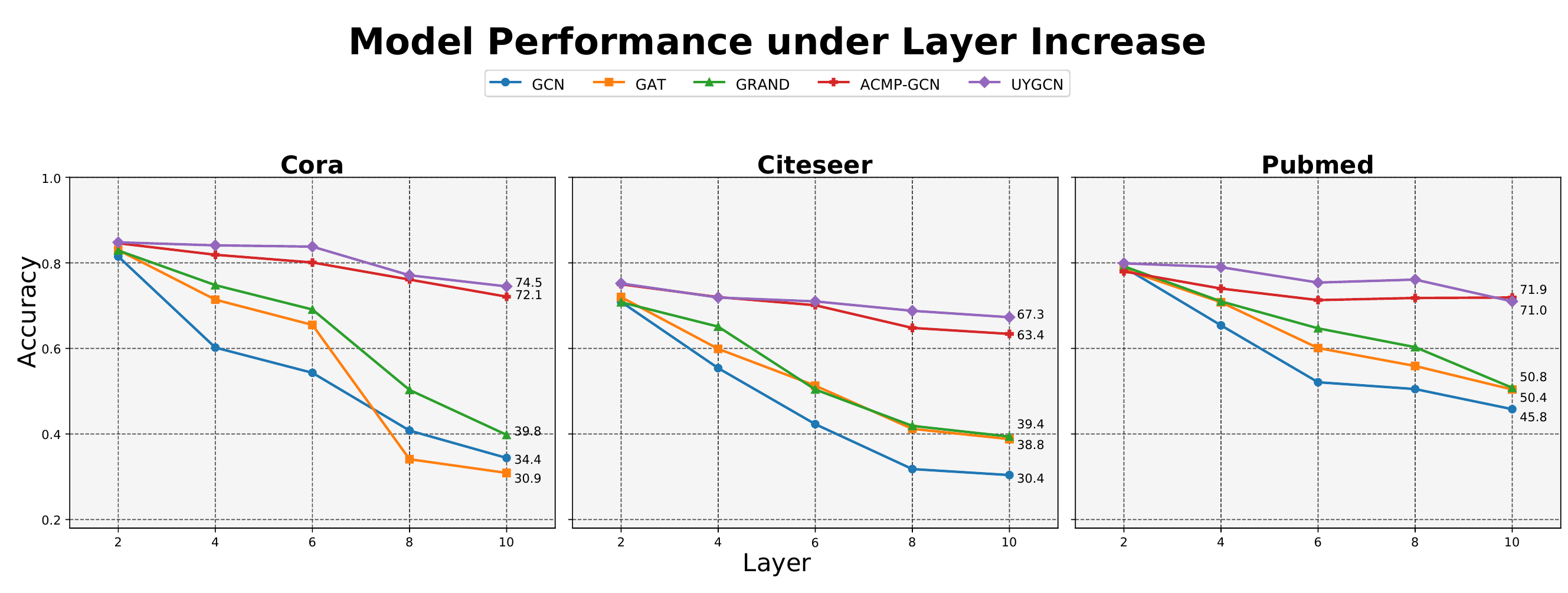}
    \caption{Model performance under the layer increases.}\label{fig:osm}
\end{figure*}

\subsection{Sensitivity Analysis}\label{sensitivity}
\begin{figure*}[t]
    \centering
    \includegraphics[width = 0.99
    \textwidth, height = 0.30\textwidth]{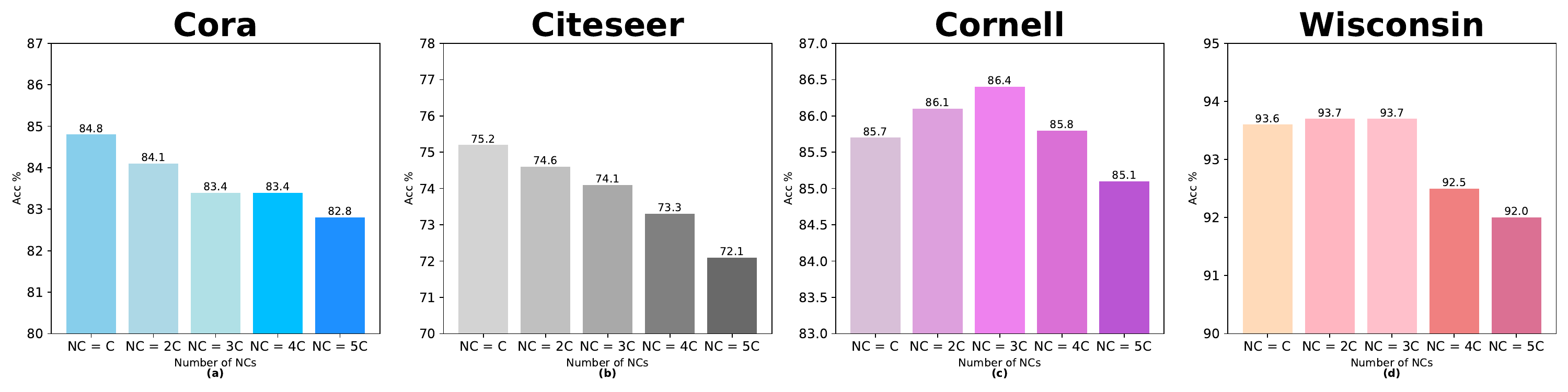}
    \caption{Learning Accuracy of UYGCN with different number of CNs.} \label{fig:sensitivity}
\end{figure*}
In this section, we conduct the sensitivity analysis of UYGCN on the number of CNs. Specifically, we aim to test whether a larger number of negatively weighted edges induced from more CNs will lead to higher repulsive forces between nodes. In general, adding more CNs to the graph will lead to a higher number of negatively weighted edges unless all labels are evenly distributed. To simplify our analysis, we only test the number of CNs as $C$, $2C$, $3C$, $4C$, and $5C$. We select \texttt{Cora}, \texttt{Citeseer}, \texttt{Cornell}, and \texttt{Wisconsin} and fix all other model parameters. The learning accuracy is presented in Figure \ref{fig:sensitivity}. One can observe that for homophilic graphs (\texttt{Cora} and \texttt{Citeseer}), there is a certain amount of accuracy drop with the increase of CNs. This directly verifies that a higher number of CNs introduce more repulsive forces from those newly added negatively weighted edges, making UYGCN less adaptable to homophilic graphs. On the other hand, our model accuracy is even with a slight increase when the number of CNs increases from $C$ to $3C$ via heterophilic graphs. This suggests that more repulsive forces should be preferred for fitting heterophilic graphs. However, with the number of CNs increasing to $4C$ and $5C$, the learning accuracy drops and yields an even worse outcome than $|CNs| = C$. This could be because when $|CNs| = 4C$ and $5C$, the power of double-well potential might not be sufficient to restrict the variations between node features. We highlight that, in this case, one may prefer to deploy a stronger trapping force to the system \cite{wang2022acmp}. 
Lastly, it is worth noting that a larger number of CNs might be more useful for a graph that contains a large number of nodes, while a small number of distinctive labels, as in this case, one shall prefer to have sufficient ``gravitational'' sources to induce appropriate attractive and repulsive forces to all nodes.

\subsection{Node Classification on Long range Graph Benchmarks.}\label{experiment_lrgb}

\begin{figure*}[h]
    \centering
    \includegraphics[scale=0.55]{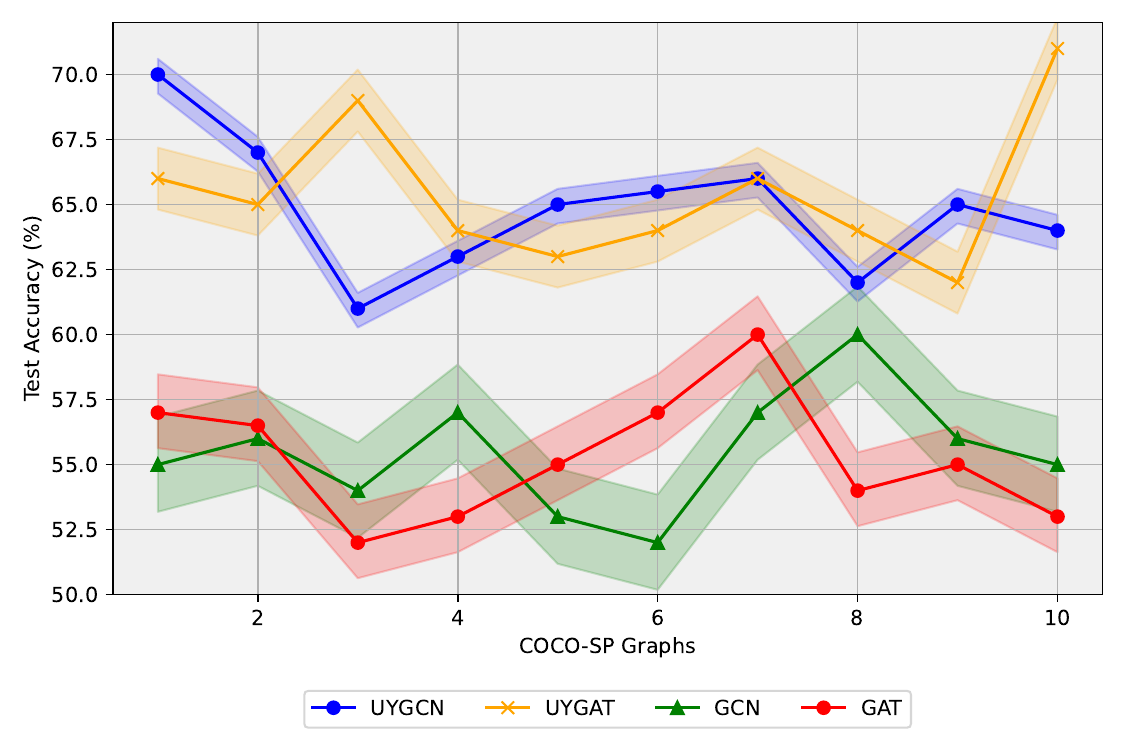}
    \caption{Learning Accuracy (F1 score) of UYGCN, UYGAT and their original models, GCN and GAT on long-term graph benchmarks.} \label{fig:coco_chat}
\end{figure*}

We test UYGCN and UYGAT over long-range graph benchmarks (LRGBs), namely \texttt{COCO-SP} provided in \cite{dwivedi2022long}. The dataset is designed to test whether one GNN model can capture long-range dependency between nodes under the metric of macro F1 score. Specifically, the \texttt{COCO-SP} dataset is a node classification dataset based on the MS COCO image dataset \cite{lin2014microsoft}
where each superpixel node denotes an image region belonging to a particular class. We highlight that, given the dataset contains over 100000 graphs \cite{dwivedi2022long}, our purpose for this analysis is to verify that UYGNNs can capture the long-term dependency between nodes in one graph. Therefore, we randomly sample 10 graphs from both datasets and report the average F1 scores {and standard derivations.} We compare UYGCN and UYGAT to the performance of GCN and GAT, and all models are 4 layers deep. The results are shown in Figure~\ref{fig:coco_chat}. One can find that both UYGCN and UYGAT show superior performance compared to their original counterparts, GCN and GAT, among all selected graphs. This suggests that our proposed approach can dramatically increase the model's power of capturing long-term dependency between nodes. {Lastly, we noticed that although, in general, attention-based models like GAT can outperform the original GCN via most classification tasks, when it comes to the tasks that require GNNs to capture the long-term relationship between nodes, UYGAT didn't show significant higher learning accuracy compared to UYGCN and similar observation can be found via the comparison between GCN and GAT. We leave the exploration of the effectiveness of the attention mechanism over long-range graph benchmarks in future work.}

\section{Related Works}\label{related_works}
\paragraph{Label Enhanced Approaches in GNNs}
Node labeling information has been leveraged in recent studies of GNNs to enhance their performance. For example, \cite{wang2021bag} and \cite{shi2020masked} utilized labeling information to enrich the node features, followed by the work \cite{chen2019highwaygraph} in which labeling information is leveraged for enhancing GNNs on the long-distance node relations (i.e., OSQ problem). \cite{yang2021self} further improved GNNs' performance using the labels from the outputs of the model. In addition, \cite{yang2019topology} deployed a topological optimization scheme to propagate labels under fixed points conditions \cite{shi2023fixed}, assuming that the nearby vertices in a graph tend to share the same label. Finally, the label-enhanced graph neural network (LEGNN) developed in \cite{yu2022label} expands the graph adjacency matrix with node labels and empirically shows the accuracy gain of the GNNs enhanced by their methods. 

\paragraph{Dirichlet Energy and OSM}
The so-called Dirichlet energy and its variants are the commonly applied measurement on the OSM issue of GNNs. Although the OSM issue has been observed for years \cite{cai2020note}, its commonly accepted definition has just been established recently \cite{rusch2023survey}. Nevertheless, many attempts have been made to mitigate the OSM issue via the lens of dynamic systems \cite{han2023continuous,di2022graph,thorpe2022grand}, multi-scale spectral filtering \cite{han2022generalized,shao2023unifying,yang2022quasi} as well as energy optimizations \cite{shao2022generalized,FuZhaoBian2022,zhai2023bregman}.

{
Specifically, \cite{di2022graph} shows that most of the GNN's propagations can be treated as the gradient flow of feature Dirichlet energy and its variants. If such energy is monotonically increasing via the propagation of one GNN, then such GNN is capable of handling the OSM problem as all nodes tend to have distinct features. In addition, \cite{di2022graph} also linked this observation to the characteristics of the filtering functions on the graph spectra and introduced the notion of so-called low/high-frequency dominance to asymptotically measure the OSM phenomenon. Following this work, \cite{han2022generalized} further generalized the traditional Dirichlet energy to the so-called graph framelet domain and showed that with the help of the low-pass and high-pass filtering functions, graph framelet can avoid the OSM problem and thus capable of handling both homophilic and heterophilic graphs. Finally, energy regularization approaches have also been applied. For example, \cite{FuZhaoBian2022,shao2022generalized} proposed $p$-Laplacian regularization to adjust the Dirichlet energy of the GNN outputs. Such methods yield the implicit layer scheme usually leveraged as an additional propagation layer to GNNs. 
}

\paragraph{Graph Topology and OSQ}
Unlike the OSM, the OSQ problem has just been identified and quantified recently \cite{alon2020bottleneck,topping2021understanding,shi2023exposition}, and little is known about a commonly acceptable definition of the OSQ issue. 
{Despite this, there are two general types of methods to mitigate OSQ, namely spatial and spectral graph rewiring methods, respectively. Although both spatial and spectral rewiring methods tend to rewire the graph to make the information flow easier to mitigate the OSQ problem, spatial rewiring methods target more on the graph's topological indicators, whereas spectral methods commonly focus on the rewiring based on graph spectral characteristics. 

Specifically, \cite{topping2021understanding} firstly linked the OSQ problem to the graph curvatures and showed that it is those edges with negative (balanced) Forman curvatures responsible for the OSQ problem, and a corresponding curvature flow-based rewiring approach is proposed. Later, the relationships between OSQ and other types of graph curvatures (i.e., Ricci curvature) are developed \cite{giraldo2022understanding,shi2023frameless, fesser2023mitigating}. On the other hand, spectral methods treat the OSQ problem as an ``unbalance'' of the identified spectral indicators such as spectral gap \cite{karhadkar2023fosr}, Cheeger constant \cite{banerjee2022oversquashing}, and effective resistance. It is worth noting that recent research \cite{giraldo2023trade,shao2023unifying} has also illustrated that although edges with negative curvatures are responsible for the OSQ problem, indicating a ``tough'' information transaction, those edges with very positive curvatures are, in general, responsible for the OSM problem since the node features are too ``easy'' to be homogenized by GNNs, suggesting a trade-off between OSQ and OSM. 

}

{
\section{Limitation of the Model}
Although this work introduced the CNs and the related labeling information-guided rewiring scheme to mitigate both OSM and OSQ problems, our model still has several limitations. For example, we are still unsure what the optimized number of CNs for a given graph is, although in most cases, setting the number of CNs equal to the number of classes is enough to make UYGNN generate state-of-the-art results. Secondly, even though we have shown that the rewiring process via UYGNN is able to increase the upper bound of the OSQ score, thus making UYGNN robust to the OSQ problem, in practice, whether UYGNN can actually achieve this upper bound is still immeasurable. In fact, to properly verify whether one GNN can significantly mitigate the OSQ problem is still an open question and a comprehensive framework in both theoretical and empirical aspects is still needed \cite{shi2023exposition}. Moreover, our experiment only shows the performance of UYGNN via node classification tasks; whether the labeling information remains powerful via other graph tasks such as graph pooling (classification) or anomaly detection is still unknown. Lastly, the effectiveness of labeling information-guided rewiring schemes leveraged in UYGNN is still unknown for graphs with multi-relations between nodes (i.e., Heterogeneous graphs and social networks) \cite{peng2024unsupervised} in which a more complex rewiring scheme, which contains both relation and the sign of edge weights sections, is required. We leave this to the future work.

}

\section{Concluding Remarks}
In this work, we developed a model agnostic approach to enhance GNNs to handle several major challenges via their propagation by leveraging the analogy of particle systems and node labeling information. We verified the properties of our model along with theoretical analysis and various empirical studies. We also investigated the functionality of negative eigenvalues in the graph spectrum from the perspective of heterophilic adaption and the additional edges induced from CNs through the lens of curvature. As CNs bring the training of GNNs in the realm of signed graphs, further exploration of the properties of signed graphs, such as the signed graph ``spectral gap'' and curvature for measuring the goodness of connectivity   
are needed \cite{belardo2019open} to develop more advanced GNN enhancement approaches. We leave these as future works.

\newpage


\appendix

\section{Illustration on Cluster Flocking}\label{append: cluster_flocking}
The notion of cluster flocking is extensively studied in the field of particle systems and mathematical physics \cite{fang2019emergent,ahn2010stochastic,carrillo2010asymptotic}, serving as a tool for describing the evolution of a system. The famous Cucker-Smale system (CS) is one of the most popular systems that is considered a second-order system adopting classic dynamics. The recent work \cite{fang2019emergent} analyzed the potential bi-cluster flocking behaviors via the CS system with two ensembles with both repulsive and attractive forces that extend the conclusions from the previous studies where only attractive forces were considered. In the context of dynamic systems on graphs, the bi-cluster flocking can be defined in the following way,
\begin{defn}[Bi-cluster flocking \cite{wang2022acmp}]
    For a graph $\mathcal G$, its nodes are said to have a bi-cluster flocking behavior if there exist two disjoint sets of nodes subsets $\mathcal{V}_1=\{v^{(1)}_{i}\}^{N_1}_{i=1}$ and $\mathcal{V}_2=\{v^{(2)}_{i}\}^{N_2}_{i=1}$ ($N=N_1+N_2$) on which the dynamics $\{\mathbf h^{(1)}_{i}(t)\}_{i=1}^{N_1}$ and $\{\mathbf h^{(2)}_{i}(t)\}_{i=1}^{N_2}$  satisfy the following conditions 
    \begin{align}
        &\sup_{0\leq t \leq \infty} \max_ {1 \leq i,j \in  N_1} | h^{(1)}_{i}(t) -  h^{(1)}_{j}(t)| < \infty, \notag \\
        &\sup_{0\leq t \leq \infty} \max_ {1 \leq i,j \in  N_2} |h^{(2)}_{i}(t) -  h^{(2)}_{j}(t)| < \infty,
    \end{align}
    and 
    \begin{align}
        &\exists C', t^* >0, \text{ s.t. } \forall t > t^*, 
        \quad  \max_ {1 \leq i\in  N_1, 1 \leq j\in  N_2} \{| h^{(1)}_{i}(t) -  h^{(2)}_{j}(t) |\} \geq C', . 
    \end{align}
    where $ h^{(1)}_{i}(t)$ and $h^{(2)}_{j}(t)$ are any single component of vectors $\mathbf h^{(1)}_{i}(t)$ and $\mathbf h^{(2)}_{j}(t)$, respectively.
\end{defn}
 In the case of its discrete dynamic, the first condition in the above definition suggests given any iteration of the GNN defined on $\mathcal G$, the within-group node feature difference is bounded, whereas the second condition suggests there exists a specific number of layer $\ell^*$, after which the between-graph variation will always be greater than a constant $C'$.

Similar to ACMP, here we briefly illustrate that UYGAT can also achieve bi-cluster flocking. We highlight that given most of the proofs are in general similar to ACMP \cite{wang2022acmp} and the analysis in \cite{fang2019emergent}, we will mainly focus on the illustration of our conclusion. However, it is possible to observe bi-cluster flocking via UYGAT in which both attractive and repulsive forces are further reweighed from the attention coefficient, serving as a special type of ACMP in which $\beta_{i,j}$ is learnable rather than a positive constant. Thus according to Proposition 2 in \cite{wang2022acmp}, UYGAT has one bi-cluster flocking. 

Lastly, it is worth emphasizing again that although both our incoming results and ACMP model illustrate that our models can have node bi-cluster flocking, the necessary and sufficient conditions on how to achieve it are still unknown and worth future exploration.  

Let us turn to the UYGCN model with double-well potential with the dynamic as follows,
\begin{align}
    \frac{\partial}{\partial t}\mathbf h_i(t) = \sum_{j\in \mathcal N_i} a_c(\mathbf h_i, \mathbf h_j) (\mathbf h_j - \mathbf h_i) + \delta \mathbf h_i(1-\mathbf h^2_i), \label{Eq:14}
\end{align}
where for the convenience reason, we denote $\mathcal N_i$ as the neighbours of node $i$ in $\mathcal G_c$ which is the graph expanded by the CNs. We highlight that in ACMP, a strength coupling $(\alpha, \delta)$ is defined to further quantify the attractive and repulsive forces in the system. One can check that UYGCN is a special case of ACMP in terms of the strength coupling where $\alpha =1$. Next, we quantify bi-cluster flocking according to \cite{wang2022acmp} and \cite{fang2019emergent,carrillo2010asymptotic}. Further, due to the form of $\mathbf A_c$, the edge weights in Eq.~\eqref{Eq:14} are $\pm 1$ in front of their components difference. One can see that, based on the setting of UYGCN, the minimum attractive force ($+1$) is equal to the maximum repulsive force ($-1$). Therefore, the system is unlikely (but possible) to have bi-cluster flocking.

\section{Experiment Details}\label{append:experimental_details}
In this section, we provide additional details of our empirical analysis.

\subsection{Performance on Citation and Heterophilic Benchmarks}
We start with additional details on the performance of UYGNN and UYGAT via citation networks and heterophilic graphs. 

We tuned hyper-parameters using the grid search method for the parameter searching space. The search space for learning rate was in $\{0.1, 0.05, 0.01, 0.005\}$, number of hidden units in $\{16, 32, 64\}$, weight decay in $\{0.05, 0.01,0.005\}$, dropout in  $\{0.3, 0.5,0.7\}$, the coefficients $\delta$ for the double-well potential is initially set as $0$ for homophilic graphs and searched in the range of $\{0.5,1,1.5,2, 2.5\}$ for heterophilic graphs. 

Furthermore, we observed that doing normalization on $\mathbf A_c$ can improve UYGCN's performance in general. This could due to the fact that the normalization scheme $\mathbf D_c^{-\frac12} \mathbf A_c \mathbf D^{-\frac12}_c$ brings the degree information of the nodes to reweight $\mathbf A_c$, causing the imbalance between repulsive and attractive forces according to the definition of cluster flocking, and such imbalance may lead the system to cluster flocking. This provides the insights explaining the benefit of doing normalization on graph adjacency matrix even with negative weights. 

An additional interesting discovery in our empirical study is in some heterophilic graphs, i.e., \texttt{Cornell}, where we have a limited number of nodes. It is possible to have nodes in its training set obtained from the random split which don't cover all type of labels, i.e., total number of distinct labels in the training set less than $C$. In this case, the power of CNs will be largely diluted since there will be some CNs that do not induce any additional connections. In this case, we report the learning results of the average learning accuracy (10 times) on the training set that contains all types of labels. 

\subsection{LRGB Experiment}
We provide more descriptions on \texttt{COCO-SP} datasets and the implementation details on conducting our test on it. The \texttt{COCO-SP} dataset is a node classification dataset based on the MS COCO image dataset \cite{lin2014microsoft}
where each superpixel node denotes an image region belonging to a particular class. Based on \cite{dwivedi2022long}, there are 123,286 graphs with a total of 58.7 million nodes in \texttt{COCO-SP} where each corresponds to an image in MS COCO dataset. The graphs prepared after the superpixels
extraction have on average 476.88 nodes, with mean degree 5.65 and totally 332,091,912 edges with average edges of 2693.67 per graph. The evaluation metric for \texttt{COCO-SP} is the macro F1 score. 

Same as the reason mentioned in our main paper, since the main purpose for us to conduct UYGNNs on LRGB is to verify its advantage of capturing long-term relationships between nodes, we only randomly select 10 graphs from \texttt{COCO-SP} and report the learning accuracy of our models and their original counterparts. Different from the LRGB experiment conducted in some graph rewiring papers, for each selected, we maintained the split ratio as 20\% for training, 20\% for validation, and the rest for testing, thus resulting in a higher F1 score compared to \cite{dwivedi2022long,fesser2023mitigating}. Furthermore, to conduct a fair comparison, we set all hyper-parameters the same in the LRGB experiment.

\bibliographystyle{plain}
\bibliography{sn-bibliography}

\begin{thebibliography}{10}

\bibitem{ahn2010stochastic}
Shin~Mi Ahn and Seung-Yeal Ha.
\newblock Stochastic flocking dynamics of the {Cucker-Smale} model with multiplicative white noises.
\newblock {\em Journal of Mathematical Physics}, 51(10), 2010.

\bibitem{alon2020bottleneck}
Uri Alon and Eran Yahav.
\newblock On the bottleneck of graph neural networks and its practical implications.
\newblock In {\em International Conference on Learning Representations}, 2020.

\bibitem{banerjee2022oversquashing}
Pradeep~Kr Banerjee, Kedar Karhadkar, Yu~Guang Wang, Uri Alon, and Guido Mont{\'u}far.
\newblock Oversquashing in {GNNs} through the lens of information contraction and graph expansion.
\newblock In {\em The 58th Annual Allerton Conference on Communication, Control, and Computing}, pages 1--8. IEEE, 2022.

\bibitem{belardo2019open}
Francesco Belardo, Sebastian~M Cioab{\u{a}}, Jack~H Koolen, and Jianfeng Wang.
\newblock Open problems in the spectral theory of signed graphs.
\newblock {\em arXiv:1907.04349}, 2019.

\bibitem{black2023understanding}
Mitchell Black, Zhengchao Wan, Amir Nayyeri, and Yusu Wang.
\newblock Understanding oversquashing in {GNNs} through the lens of effective resistance.
\newblock In {\em International Conference on Machine Learning}, pages 2528--2547. PMLR, 2023.

\bibitem{cai2020note}
Chen Cai and Yusu Wang.
\newblock A note on over-smoothing for graph neural networks.
\newblock {\em arXiv:2006.13318}, 2020.

\bibitem{carrillo2010asymptotic}
Jos{\'e}~A Carrillo, Massimo Fornasier, Jes{\'u}s Rosado, and Giuseppe Toscani.
\newblock Asymptotic flocking dynamics for the kinetic {Cucker-Smale} model.
\newblock {\em SIAM Journal on Mathematical Analysis}, 42(1):218--236, 2010.

\bibitem{chamberlain2021grand}
Ben Chamberlain, James Rowbottom, Maria~I Gorinova, Michael Bronstein, Stefan Webb, and Emanuele Rossi.
\newblock Grand: Graph neural diffusion.
\newblock In {\em International Conference on Machine Learning}, pages 1407--1418. PMLR, 2021.

\bibitem{chen2019highwaygraph}
Deli Chen, Xiaoqian Liu, Yankai Lin, Peng Li, Jie Zhou, Qi~Su, and Xu~Sun.
\newblock Highwaygraph: Modelling long-distance node relations for improving general graph neural network.
\newblock {\em arXiv:1911.03904}, 2019.

\bibitem{chen2016characterizing}
Wei Chen, Ji~Liu, Yongxin Chen, Sei~Zhen Khong, Dan Wang, Tamer Ba{\c{s}}ar, Li~Qiu, and Karl~H Johansson.
\newblock Characterizing the positive semidefiniteness of signed {Laplacians} via effective resistances.
\newblock In {\em The 55th IEEE Conference on Decision and Control}, pages 985--990. IEEE, 2016.

\bibitem{dwivedi2022long}
Vijay~Prakash Dwivedi, Ladislav Ramp{\'a}{\v{s}}ek, Michael Galkin, Ali Parviz, Guy Wolf, Anh~Tuan Luu, and Dominique Beaini.
\newblock Long range graph benchmark.
\newblock {\em Advances in Neural Information Processing Systems}, 35:22326--22340, 2022.

\bibitem{fang2019emergent}
Di~Fang, Seung-Yeal Ha, and Shi Jin.
\newblock Emergent behaviors of the {Cucker-Smale }ensemble under attractive-repulsive couplings and {Rayleigh} frictions.
\newblock {\em Mathematical Models and Methods in Applied Sciences}, 29(07):1349--1385, 2019.

\bibitem{fesser2023mitigating}
Lukas Fesser and Melanie Weber.
\newblock Mitigating over-smoothing and over-squashing using augmentations of forman-ricci curvature.
\newblock In {\em Learning on Graphs Conference}, pages 19--1. PMLR, 2023.

\bibitem{FuZhaoBian2022}
Guoji Fu, Peilin Zhao, and Yatao Bian.
\newblock $p$-{Laplacian} based graph neural networks.
\newblock In {\em International Conference on Machine Learning}, volume 162 of {\em PMLR}, pages 6878--6917, 2022.

\bibitem{gao2022repulsion}
Jian Gao and Jianshe Wu.
\newblock {Repulsion-GNNs}: Use repulsion to supplement aggregation.
\newblock {\em IEEE Transactions on Knowledge and Data Engineering}, 35(8):8448--8460, 2022.

\bibitem{gilmer2017neural}
Justin Gilmer, Samuel~S Schoenholz, Patrick~F Riley, Oriol Vinyals, and George~E Dahl.
\newblock Neural message passing for quantum chemistry.
\newblock In {\em International Conference on Machine Learning}, pages 1263--1272. PMLR, 2017.

\bibitem{di2022graph}
Francesco~Di Giovanni, James Rowbottom, Benjamin~Paul Chamberlain, Thomas Markovich, and Michael~M. Bronstein.
\newblock Understanding convolution on graphs via energies.
\newblock {\em Transactions on Machine Learning Research}, 2023.

\bibitem{giraldo2022understanding}
Jhony~H Giraldo, Fragkiskos~D Malliaros, and Thierry Bouwmans.
\newblock Understanding the relationship between over-smoothing and over-squashing in graph neural networks.
\newblock {\em arXiv:2212.02374}, 2022.

\bibitem{giraldo2023trade}
Jhony~H. Giraldo, Konstantinos Skianis, Thierry Bouwmans, and Fragkiskos~D. Malliaros.
\newblock On the trade-off between over-smoothing and over-squashing in deep graph neural networks.
\newblock In {\em Proceedings of the 32nd ACM International Conference on Information and Knowledge Management}, CIKM ’23. ACM, October 2023.

\bibitem{gori2005new}
Marco Gori, Gabriele Monfardini, and Franco Scarselli.
\newblock A new model for learning in graph domains.
\newblock In {\em IEEE International Joint Conference on Neural Networks, 2005.}, volume~2, pages 729--734. IEEE, 2005.

\bibitem{han2023continuous}
Andi Han, Dai Shi, Lequan Lin, and Junbin Gao.
\newblock From continuous dynamics to graph neural networks: Neural diffusion and beyond.
\newblock {\em Transactions on Machine Learning Research}, 2024.
\newblock Survey Certification.

\bibitem{han2022generalized}
Andi Han, Dai Shi, Zhiqi Shao, and Junbin Gao.
\newblock Generalized energy and gradient flow via graph framelets.
\newblock {\em arXiv:2210.04124}, 2022.

\bibitem{hu2020open}
Weihua Hu, Matthias Fey, Marinka Zitnik, Yuxiao Dong, Hongyu Ren, Bowen Liu, Michele Catasta, and Jure Leskovec.
\newblock Open graph benchmark: Datasets for machine learning on graphs.
\newblock {\em Advances in Neural Information Processing Systems}, 33:22118--22133, 2020.

\bibitem{jelic2012double}
V~Jelic and F~Marsiglio.
\newblock The double-well potential in quantum mechanics: a simple, numerically exact formulation.
\newblock {\em European Journal of Physics}, 33(6):1651, 2012.

\bibitem{jin2008discrete}
Miao Jin, Junho Kim, Feng Luo, and Xianfeng Gu.
\newblock Discrete surface {Ricci} flow.
\newblock {\em IEEE Transactions on Visualization and Computer Graphics}, 14(5):1030--1043, 2008.

\bibitem{karhadkar2023fosr}
Kedar Karhadkar, Pradeep~Kr. Banerjee, and Guido Montufar.
\newblock Fo{SR}: First-order spectral rewiring for addressing oversquashing in {GNN}s.
\newblock In {\em International Conference on Learning Representations}, 2023.

\bibitem{kipf2016semi}
Thomas~N. Kipf and Max Welling.
\newblock Semi-supervised classification with graph convolutional networks.
\newblock In {\em International Conference on Learning Representations}, 2017.

\bibitem{lin2014microsoft}
Tsung-Yi Lin, Michael Maire, Serge Belongie, James Hays, Pietro Perona, Deva Ramanan, Piotr Doll{\'a}r, and C~Lawrence Zitnick.
\newblock Microsoft {COCO}: Common objects in context.
\newblock In {\em The 13th European Conference on Computer Vision}, pages 740--755. Springer, 2014.

\bibitem{luo2017convergence}
Xiyang Luo and Andrea~L Bertozzi.
\newblock Convergence of the graph {Allen--Cahn} scheme.
\newblock {\em Journal of Statistical Physics}, 167:934--958, 2017.

\bibitem{marsiglio2009harmonic}
F~Marsiglio.
\newblock The harmonic oscillator in quantum mechanics: A third way.
\newblock {\em American Journal of Physics}, 77(3):253--258, 2009.

\bibitem{peng2024unsupervised}
Hao Peng, Jingyun Zhang, Xiang Huang, Zhifeng Hao, Angsheng Li, Zhengtao Yu, and Philip~S Yu.
\newblock Unsupervised social bot detection via structural information theory.
\newblock {\em ACM Transactions on Information Systems}, 2024.

\bibitem{rusch2023survey}
T~Konstantin Rusch, Michael~M Bronstein, and Siddhartha Mishra.
\newblock A survey on oversmoothing in graph neural networks.
\newblock {\em arXiv:2303.10993}, 2023.

\bibitem{scarselli2008graph}
Franco Scarselli, Marco Gori, Ah~Chung Tsoi, Markus Hagenbuchner, and Gabriele Monfardini.
\newblock The graph neural network model.
\newblock {\em IEEE Transactions on Neural Networks}, 20(1):61--80, 2008.

\bibitem{shao2023unifying}
Zhiqi Shao, Dai Shi, Andi Han, Yi~Guo, Qibin Zhao, and Junbin Gao.
\newblock Unifying over-smoothing and over-squashing in graph neural networks: A physics informed approach and beyond.
\newblock {\em arXiv:2309.02769}, 2023.

\bibitem{shao2022generalized}
Zhiqi Shao, Dai Shi, Andi Han, Andrey Vasnev, Yi~Guo, and Junbin Gao.
\newblock Enhancing framelet {GCN}s with generalized {p-Laplacian} regularization.
\newblock {\em International Journal of Machine Learning and Cybernetics}, pages 1--21, 2023.

\bibitem{shi2023curvature}
Dai Shi, Yi~Guo, Zhiqi Shao, and Junbin Gao.
\newblock How curvature enhance the adaptation power of framelet {GCNs}.
\newblock {\em arXiv:2307.09768}, 2023.

\bibitem{shi2023fixed}
Dai Shi, Andi Han, Yi~Guo, and Junbin Gao.
\newblock Fixed point {Laplacian} mapping: A geometrically correct manifold learning algorithm.
\newblock In {\em International Joint Conference on Neural Networks (IJCNN)}, pages 1--9. IEEE, 2023.

\bibitem{shi2023exposition}
Dai Shi, Andi Han, Lequan Lin, Yi~Guo, and Junbin Gao.
\newblock Exposition on over-squashing problem on {GNNs}: Current methods, benchmarks and challenges.
\newblock {\em arXiv:2311.07073}, 2023.

\bibitem{shi2023frameless}
Dai Shi, Zhiqi Shao, Yi~Guo, and Junbin Gao.
\newblock Frameless graph knowledge distillation.
\newblock {\em arXiv preprint arXiv:2307.06631}, 2023.

\bibitem{shi2024new}
Dai Shi, Zhiqi Shao, Andi Han, Yi~Guo, and Gao Junbin.
\newblock A new perspective on the expressive equivalence between graph convolution and attention models.
\newblock In {\em Asian Conference on Machine Learning}, pages 1199--1214. PMLR, 2024.

\bibitem{shi2020masked}
Yunsheng Shi, Zhengjie Huang, Shikun Feng, Hui Zhong, Wenjin Wang, and Yu~Sun.
\newblock Masked label prediction: Unified message passing model for semi-supervised classification.
\newblock {\em arXiv:2009.03509}, 2020.

\bibitem{song2019extension}
Yue Song, David~J Hill, and Tao Liu.
\newblock On extension of effective resistance with application to graph {Laplacian} definiteness and power network stability.
\newblock {\em IEEE Transactions on Circuits and Systems I: Regular Papers}, 66(11):4415--4428, 2019.

\bibitem{sperduti1993encoding}
Alessandro Sperduti.
\newblock Encoding labeled graphs by labeling {RAAM}.
\newblock {\em Advances in Neural Information Processing Systems}, 6, 1993.

\bibitem{thorpe2022grand}
Matthew Thorpe, Tan~Minh Nguyen, Hedi Xia, Thomas Strohmer, Andrea Bertozzi, Stanley Osher, and Bao Wang.
\newblock {GRAND}++: Graph neural diffusion with a source term.
\newblock In {\em International Conference on Learning Representations}, 2022.

\bibitem{topping2021understanding}
Jake Topping, Francesco Di~Giovanni, Benjamin~Paul Chamberlain, Xiaowen Dong, and Michael~M Bronstein.
\newblock Understanding over-squashing and bottlenecks on graphs via curvature.
\newblock In {\em International Conference on Learning Representations}, 2021.

\bibitem{velivckovic2018graph}
Petar Veli{\v{c}}kovi{\'c}, Guillem Cucurull, Arantxa Casanova, Adriana Romero, Pietro Li{\`o}, and Yoshua Bengio.
\newblock Graph attention networks.
\newblock In {\em International Conference on Learning Representations}, 2018.

\bibitem{wang2021bag}
Yangkun Wang, Jiarui Jin, Weinan Zhang, Yong Yu, Zheng Zhang, and David Wipf.
\newblock Bag of tricks for node classification with graph neural networks.
\newblock {\em arXiv:2103.13355}, 2021.

\bibitem{wang2022acmp}
Yuelin Wang, Kai Yi, Xinliang Liu, Yu~Guang Wang, and Shi Jin.
\newblock {ACMP}: Allen-cahn message passing with attractive and repulsive forces for graph neural networks.
\newblock In {\em The Eleventh International Conference on Learning Representations}, 2023.

\bibitem{yang2021self}
Han Yang, Xiao Yan, Xinyan Dai, Yongqiang Chen, and James Cheng.
\newblock Self-enhanced {GNN}: Improving graph neural networks using model outputs.
\newblock In {\em International Joint Conference on Neural Networks}, pages 1--8. IEEE, 2021.

\bibitem{yang2019topology}
Liang Yang, Zesheng Kang, Xiaochun Cao, Di~Jin, Bo~Yang, and Yuanfang Guo.
\newblock Topology optimization based graph convolutional network.
\newblock In {\em International Joint Conferences on Artificial Intelligence}, pages 4054--4061, 2019.

\bibitem{yang2022quasi}
Mengxi Yang, Xuebin Zheng, Jie Yin, and Junbin Gao.
\newblock Quasi-framelets: Another improvement to graph neural networks.
\newblock {\em arXiv:2201.04728}, 2022.

\bibitem{yu2022label}
Le~Yu, Leilei Sun, Bowen Du, Tongyu Zhu, and Weifeng Lv.
\newblock Label-enhanced graph neural network for semi-supervised node classification.
\newblock {\em IEEE Transactions on Knowledge and Data Engineering}, 2022.

\bibitem{zhai2023bregman}
Jiayu Zhai, Lequan Lin, Dai Shi, and Junbin Gao.
\newblock Bregman graph neural network.
\newblock In {\em ICASSP 2024-2024 IEEE International Conference on Acoustics, Speech and Signal Processing (ICASSP)}, pages 6250--6254. IEEE, 2024.

\end{thebibliography}

\end{document}